\documentclass{article}


\usepackage[nonatbib,preprint]{neurips_2024}

\usepackage{hyperref}

\usepackage[utf8]{inputenc}
\usepackage{amsmath}
\usepackage{graphicx}
\usepackage{listings}
\usepackage{bbold}
\usepackage{amsthm}
\usepackage{subcaption}
\usepackage{caption}
\newtheorem{theorem}{Theorem}[section]

\usepackage{enumitem}
\usepackage{booktabs}
\usepackage{wrapfig}


\usepackage{amsmath}
\usepackage{amssymb}
\usepackage{mathtools}
\usepackage{amsthm}


\theoremstyle{plain}

\theoremstyle{definition}

\theoremstyle{remark}





\usepackage[utf8]{inputenc} 
\usepackage[T1]{fontenc}    
\usepackage{hyperref}       
\usepackage{url}            
\usepackage{booktabs}       
\usepackage{amsfonts}       
\usepackage{nicefrac}       
\usepackage{microtype}      
\usepackage{xcolor}         

\usepackage{microtype}
\usepackage{graphicx}
\usepackage{booktabs} 
\usepackage{multirow}
\usepackage{array} 
\usepackage{tabularx}
\usepackage{booktabs} 
\usepackage{makecell}

\title{Custom Gradient Estimators are Straight-Through Estimators in Disguise}

%

\author{%
  Matt Schoenbauer \\
  \texttt{matt.schoenbauer3@gmail.com} \\
   \And
  Daniele Moro \\
  Google\\
  Mountain View, CA, 94043 \\
  \texttt{danielemoro@google.com} \\
   \And
  Lukasz Lew \\
  Google\\
  Mountain View, CA, 94043 \\
  \texttt{lew@google.com} \\
   \And
  Andrew Howard \\
  Google\\
  Mountain View, CA, 94043 \\
  \texttt{howarda@google.com} \\
}

\begin{document}

\maketitle

\begin{abstract}
Quantization-aware  training  comes with a fundamental challenge: the derivative of quantization functions such as rounding are zero almost everywhere and nonexistent elsewhere. Various differentiable approximations of quantization functions have been proposed to address this issue.
In this paper, we prove that a large class of weight gradient estimators is approximately equivalent with the straight through estimator (STE). Specifically, after swapping in the STE and adjusting both the weight initialization and the learning rate in SGD, the model will train in almost exactly the same way as it did with the original gradient estimator. Moreover, we show that for adaptive learning rate algorithms like Adam, the same result can be seen without any modifications to the weight initialization and learning rate. These results reduce the burden of hyperparameter tuning for practitioners of QAT, as they can now confidently choose the STE for gradient estimation and ignore more complex gradient estimators. 
 We experimentally show that these results hold for both a small convolutional model trained on the MNIST dataset and for a ResNet50 model trained on ImageNet.
\end{abstract}

\section{Introduction}

\textbf{The importance of quantized deep learning.} Quantized deep learning has gained significant attention as a means to address the demand for efficient deployment of deep neural networks on resource-constrained devices. Traditional deep learning models typically employ high-precision representations, consuming substantial computational resources and memory. Quantized deep learning techniques offer a compelling solution by reducing the precision of network parameters and activations. Although the Post-Training Quantization technique is easier to use to quantize any given model, Quantization-Aware Training (QAT) has been shown to provide higher quality results since quantized weights are updated throughout the training process \cite{nagel2021white}.


\textbf{Gradient estimators are needed in QAT.} QAT encounters a problem where
the derivatives of quantization functions are zero or nonexistent everywhere. To sidestep this problem, practitioners use approximations of the quantization functions  (known as \textit{gradient estimators}) for backpropagation. The straight-through estimator is a common choice for this, but many believe it is better for a  gradient estimator to more closely approximate the rounding function. 
We show that this belief is misguided. 

\textbf{Our main contributions are as follows:}
\begin{enumerate}
    \item A proof under minimal assumptions that all nonzero weight gradient estimators lead to approximately equivalent weight movement for non-adaptive learning rate optimizers (SGD, SGD + Momentum, etc.) when the learning rate is sufficiently small, after a change to weight initialization and learning rates has been applied. 
    \label{statement:main_non_adaptive}
    \item A proof that for adaptive learning rate optimizers (Adam, RMSProp, etc.) the same result holds without any need for adjustment to the learning rate and weight initialization.
    \label{statement:main_adaptive}
    \item Empirical evidence demonstrating this result on both a small deep neural networked train on MNIST and a larger ResNet50 model trained on ImageNet.
\end{enumerate}
 \textbf{Value for practitioners:} Our findings reduce the burden of hyperparameter tuning for QAT. Practitioners can now confidently choose the Straight Through Estimator \cite{bengio2013estimating} for gradient estimation and allocate their attention on problems like choosing the weight initialization scheme, learning rate, and optimization method.

\section{Background and Related Work}

\label{sec:background}

\label{sec:background_quantization}

\textbf{The standard quantizer function.} The core operation in QAT is the application of a quantizer function to weights and activations, which transforms continuous, high-precision values into discrete, lower-precision representations. Quantization functions act  elementwise on weight tensors $\mathbf{w}$, and can therefore be described by scalar functions on weights $w$. While there are many options for the arrangement of quantized values \cite{dettmers2023qlora,jung2019learning,przewlocka2022power,oh2021automated,liu2022nonuniform}, we will be focused on the most popular formulation, uniform quantization functions, which are defined by
\begin{equation}
Q(x) := \Delta \cdot {\textrm{round}}\left( \textrm{clip} \left( \frac{x}{\Delta}, l, u \right) \right) \qquad \mathrm{where} \qquad \textrm{clip}(x, l, u) = \begin{cases} 
l & \text{if } x < l, \\
x & \text{if } l \leq x \leq u, \\
u & \text{if } x > u.
\end{cases}
\label{quantizer}
\end{equation}
The problem of choosing $\Delta$, $l$, and $u$ is well-researched, and we cover common approaches in Appendix \ref{app:quantization_basics}. 

\textbf{Boundary points.} We will refer to the sets of quantizer input values that map to a single output value as \textit{quantization bins}. The boundaries of these bins are known as \textit{boundary points}. We will use $w_+$ and $w_-$ to refer to the lower and upper boundary points for the bin containing weight $w$. One of these points must exist for each $w$, but outside of the representable range (see Appendix \ref{app:quantization_basics}) of the quantizer only one of the two will exist. Note that $w_+ - w_- = \Delta$ for all weights in the representable range. 

\textbf{The Straight Through Estimator.} 
Because $Q'(x) = dQ/dx$ is zero almost everywhere and nonexistent elsewhere, vanilla gradient descent would never update the weights of a quantized model. The standard approach for addressing this issue is to approximate $Q(x)$ by a differentiable surrogate function $\hat Q$ and use its gradient $\hat Q'(x)$ for backpropagation. 
The derivative $\hat Q'$ is known as a \textit{gradient estimator} (or \textit{gradient approximation}). The earliest popular choice of gradient estimator is known as the \textit{straight-through estimator} \cite{Hinton2012lecture,bengio2013estimating} or STE,  defined by $ \hat Q(x) = x$, $\hat Q'(x) = 1$. 

\textbf{Piecewise linear estimators.} Piecewise linear (PWL) estimators have derivative $I_{[w_{min}, w_{max}]}$, where $I$ is the indicator function. They make $\hat Q$ more closely resemble $Q$ \cite{rastegari2016xnor,hubara2016binarized,zhang2022pokebnn}. The simplest way to define a PWL estimator for a multi-bit quantizer is to  simply use Equation \ref{quantizer} with the $\textrm{round}$ step removed, and in this case $[w_{min}, w_{max}]$ is exactly the representable range. This way, the behavior of PWL estimators more closely relate to the quantization function. In general, we will use $PWL_{w_{min}, w_{max}}(x) = \textrm{clip}(x, w_{min}, w_{max})$ to denote a PWL gradient estimator.


\textbf{STE and PWL lead to ``gradient error".} The simple STE and PWL gradient estimators described above still leave a significant gap between the behavior of the forward pass and the surrogate forward pass. For this reason,  researchers have proposed a large number of custom gradient estimators, often citing a high ``gradient error" in the simpler choices of gradient estimators as motivation for their work. Gradient error is often described as the difference between $Q$ and $\hat{Q}$.

\textbf{An abundance of custom gradient estimators.} In  order to solve the perceived problem of gradient error, many researchers have proposed gradient estimators that carry more complexity than the STE or PWL estimators. In Appendix \ref{app:custom_gradients}, we cite and describe \textbf{15} examples of custom gradient estimators in the quantization literature. Plots of some prominent examples are given in Figure \ref{fig:gradients}.

\begin{figure}[ht]
\begin{center}
\centerline{\includegraphics[width=\columnwidth]{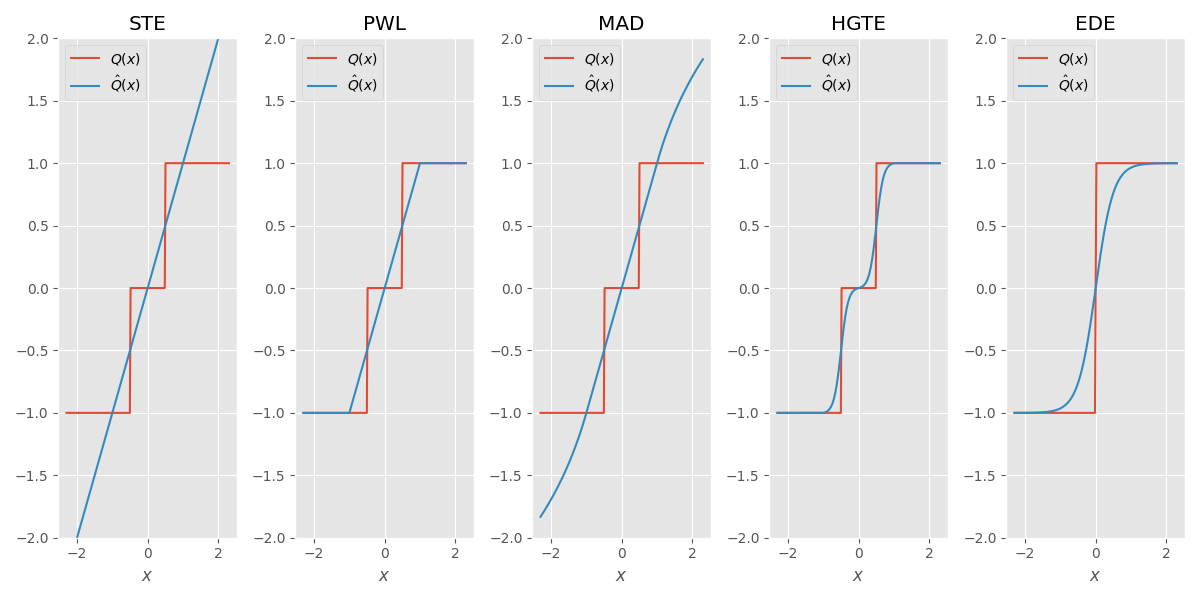}}
\caption{Gradient Estimators from left to right: STE \cite{Hinton2012lecture}, PWL \cite{hubara2016binarized}, MAD \cite{DBLP:conf/icml/SakrDVZDK22}, HTGE  \cite{pei2023quantization}, EDE \cite{Qin_2020_CVPR}. The EDE is for binary quantization, and the others are for multi-bit quantization.}
\label{fig:gradients}
\end{center}
\vskip -0.3in
\end{figure}

\section{Gradient Descent Terminology for QAT}

\label{sec:background_gradients}



For a quantized model with gradient estimator $\hat Q$, the gradient value at step $t$ is  $\nabla f(Q(w^{(t)})) \hat Q'(w^{(t)})$, where $f$ is the loss function of the model. Of course $f$ depends on the dataset and all other network weights, but we suppress this for notational convenience.   Going forward, we will abbreviate $\nabla f(Q(w^{(t)}))$ as $\nabla f^{(t)}$. The weight update is expressed as
\begin{equation}
  w^{(t+1)} = w^{(t)} +  g^{(t)}(\nabla f^{(0)}\hat Q'(w^{(0)}), \ldots, \nabla f^{(t)}\hat Q'(w^{(t)}),\eta). 
  \label{general_learning_rule}
\end{equation}
where $\eta$ is the learning rate. 
The notation for $g^{(t)}$ is borrowed from \cite{andrychowicz2016learning}.
By defining $g^{(t)}$, we can recover all of the standard gradient descent algorithms, i.e. SGD, Adam, RMSProp, etc. In the simplest case, we have $g^{(t)}(\nabla f^{(t)}\hat Q'(w^{(t)}), \eta) = - \eta \nabla f^{(t)}\hat Q'(w^{(t)})$, which gives us the common SGD learning rule
\begin{equation}
    w^{(t+1)} = w^{(t)} - \eta \nabla f^{(t)} \hat Q'(w^{(t)}).
\label{sgd}
\end{equation}
The definition of $g^{(t)}$ for SGD with momentum is given in Appendix \ref{app:sgd_change_point_momentum}. A more complex but highly popular learning rule is the Adam \cite{kingma2014adam} optimizer, which is defined with the above notation in Appendix \ref{app:adam}. 

\textbf{Adaptive and non-adaptive algorithms.} Adam is an example of an \textit{adaptive learning rate algorithm}, since the weight update steps are normalized by a computation on past gradient values. Other examples of adaptive learning rate methods are RMSprop \cite{Hinton2012lecture}, Adadelta \cite{zeiler2012adadelta}, AdaMax \cite{kingma2014adam}, and AdamW \cite{loshchilov2017decoupled}, 
We refer to all other update rules, such SGD and SGD with momentum \cite{qian1999momentum}, 
as \textit{non-adaptive learning rate algorithms}. 

\section{Intuition}

To aid the reader in developing intuition about our main results, we tell a brief story below. 

\textbf{The Mirror Room story.} Imagine you are in a room with a glass wall. On the other side of the glass wall, there is a person in another room, larger than yours. You are standing at different positions in your respective rooms. Any time you take a step, this other person takes a step in the same direction, albeit with a different step length. You continue to move around, and you are rarely exactly across from this person, but any time you try to leave, this person leaves the room on the same side at the  same time. 

You realize that the glass wall is not a wall, it's a funhouse mirror. The person on the other side is you, but the picture is ``warped" by the mirror. 

\textbf{The Mirror Room is the quantization bin for two equivalent models.} The scenario described above is similar to the relationship between the motion of weights in a model ($\hat Q$-net) that uses a complex gradient estimator $\hat Q$ and another ($STE$-net) that uses the $STE$ with the proper reconfigurations to match $\hat Q$-net. In the analogy, you are a weight in $STE$-net, your reflection is the weight in $\hat Q$-net. The room is a quantization bin, and the doors are the boundary points. The simultaneous exit of you and your reflection from the room parallels the synchronized  quantized weights in both models, leading to identical gradients and training outcomes.

\begin{wrapfigure}{r}{0.5\textwidth}
  \centering
    \vspace{-4mm}
  \includegraphics[width=0.48\textwidth]{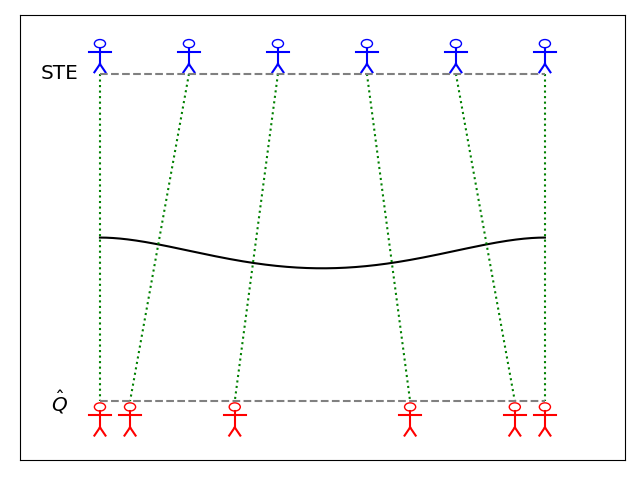}
  \caption{The funhouse mirror. The blue figure represents you (a weight in $STE$-net), and the red figure represents your reflection (a weight in $\hat Q$-net) on the other side. The reflections line up at the edge of the room.}
  \vspace{-2mm}
  \label{fig:mirro_walk}
\end{wrapfigure}

\textbf{The ``Funhouse Mirror" effect of $M$ and ${\hat Q}$.} In Section \ref{sec:main_results}, we define a map $M$ that acts as a ``funhouse mirror" mapping the weights of $\hat Q$-net to those of $STE$-net. 
Any initial weight $w^{(0)}$ in $\hat Q$-net is re-initialized to $M(w^{(0)})$ in $STE$-net, and the relationship $ M(w_{\hat Q})= w_{STE}$ approximately holds throughout training, where  \vspace{-1mm} $w_{\hat Q}$ is a weight in $\hat Q$-net, and $w_{STE}$ is the corresponding weight in $STE$-net.
Thus after the $\hat Q$-net weight takes a step, the $STE$-net weight moves in near lockstep after passing through the ``funhouse mirror" of $M$. 
Furthermore, since $M(w) = w$ whenever $w$ is a boundary point, these two weights will cross the quantization boudaries at nearly the same time.
 The bisimulation of the two models is justified by this property. 

  A visualization of the funhouse mirror is given in Figure \ref{fig:mirro_walk}.



\label{sec:intuition}

\section{Main Results}

\label{sec:main_results}

In this section we formalize the realizations of Section \ref{sec:intuition} and provide our main mathematical results (\ref{statement:main_non_adaptive} and \ref{statement:main_adaptive}). Furthermore, this will show that much of the concern about ``gradient error" is unfounded. We provide Theorem statements for both the SGD update rule and the Adam update rule, with proofs and generalizations in the Appendices. Note that all of the below results apply to weight quantizers. We do not address activation quantizers in this work.

\subsection{Definitions and Notation}
\label{sec:definitions_and_notation}

\textbf{Cyclical gradient estimators.} We say that a gradient estimator $\hat Q$ for a uniform quantizer $Q$ is \textit{cyclical} if $\hat Q$ is identical on each finite-length quantization bin, i.e.
    $
    \hat Q'(w) = \hat Q
    '(w + \Delta)
    $
    whenever $w$ and $w + \Delta$ are inside a finite-length quantization bin (i.e. within the representable range). Most multi-bit gradient estimators proposed in the literature are cyclical. Binary gradient estimators are cyclical by default, since they have no finite quantization bins. Unless otherwise specified, we will assume that all gradient estimators are cyclical.

\textbf{Definitions of $\alpha$ and $M$.} We give two more definitions before presenting the details of the models we are comparing. These objects ($\alpha$ and $M$) will allow us to succinctly express the learning rate update and weight initialization update needed to mimic the behavior of a positive gradient estimator $\hat Q$ using only the STE. 
If $Q$ is a uniform multi-bit quantizer and $\hat Q$ is cyclical, we define the learning rate adjustment factor $\alpha$ and weight readjustment map $M$:
    \begin{equation}
        \alpha := \frac{\Delta}{\int _{w_-}^{w_+} \frac{ds}{\hat Q'(s)}} \qquad \qquad  M(w) := w_b + \alpha \int_{w_b}^w \frac{ds}{\hat Q'(s)}   \label{eq:alpha_M_def}
    \end{equation}
    Here $w_+$ and $w_-$ are adjacent boundary points, and $w_b$ is any standalone boundary point. Since $Q$ is uniform and $\hat Q$ is cyclical, the definition of $\alpha$ is independent of the choice of boundary points.  If $Q$ is a binary quantizer, then $Q$ has only one boundary point, and we define $\alpha:= 1$. Note that $\alpha$ is defined entirely by $\hat Q$, and can be computed at the outset of training. It may vary per-layer if the parameters of $\hat Q$ do so. Intuitively it can be thought of as the ratio between the quantization bin size ($\Delta$) and the ``effective bin size" of a gradient estimator $\hat Q$ (the denominator of Equation \ref{eq:alpha_M_def}). The definition of $M$ is independent of the choice of $w_b$. We can think of $M$ as a function that maps a weight $w$ to a new point $M(w)$ whose relative distance from its left and right boundaries matches the relative ``effective distance" (under $\hat Q$) between the boundary points and the original weight $w$.

\textbf{Definition of $\hat Q$-net and $STE$-net.} For both optimization techniques we consider (SGD and Adam) we will study two models, $\hat Q$-net and $STE$-net. The models can have any architecture, as long as they are equivalent. We will focus on corresponding weights $w_{\hat Q}^{(t)}$ and $w_{STE}^{(t)}$, respectively, at iteration $t$. We will denote the gradients of the loss function $f$ with respect to $Q(w_{\hat Q}^{(t)})$ and $Q(w_{STE}^{(t)})$ as $\nabla f_{\hat Q}^{(t)}$ and $\nabla f_{STE}^{(t)}$, respectively.  The  differences in gradient estimators, learning rates and weight initialization for both SGD and Adam are given in Tables \ref{tab:sgd_model_def} and \ref{tab:adam_model_def}, respectively. 
\begin{table}[h]
\centering
\begin{minipage}{.48\textwidth}
\centering
\caption{$\hat Q$ and $STE$ Models for SGD} 
\label{tab:sgd_model_def}
\renewcommand{\arraystretch}{1.6}
\begin{tabular}{|c|c|c|}
\hline
\textbf{Model} & \textbf{$\hat Q$-net} & \textbf{$STE$-net} \\ \hline
Gradient Estimators & $\hat Q$ & $STE$ \\ \hline
Learning Rates & $\eta$ & $\alpha \eta$ \\ \hline
Initial Weights & $w_{\hat Q}^{(0)}$ & $M(w_{\hat Q}^{(0)})$ \\ \hline
\end{tabular}
\end{minipage}
\begin{minipage}{.48\textwidth}
\centering
\caption{$\hat Q$ and $STE$ Models for Adam} 
\label{tab:adam_model_def}
\renewcommand{\arraystretch}{1.6}
\begin{tabular}{|c|c|c|}
\hline
\textbf{ Model} & \textbf{$\hat Q$-net} & \textbf{$STE$-net} \\ \hline
Gradient Estimators & $\hat Q$ & $STE$ \\ \hline
Learning Rates & $\eta$ & $\eta$ \\ \hline
Initial Weights & $w_{\hat Q}^{(0)}$ & $w_{\hat Q}^{(0)}$ \\ \hline
\end{tabular}
\end{minipage}
\end{table}

\textbf{Comparison Metric.} 
We can quantify how the weights between $\hat Q$-net and $STE$-net differ using weight alignment error, which is defined as
\begin{equation}
    \label{eq:E_def}
E^{(t)}:= \left| M\left(w^{(t)}_{\hat Q}\right) - w^{(t)}_{STE} \right| \quad \mathrm{for\ SGD,\ and} \quad 
E^{(t)}:= \left| w^{(t)}_{\hat Q} - w^{(t)}_{STE} \right| \quad \mathrm{for\ Adam.}
\end{equation}
$E^{(t)}$ measures how far off the weights are between the two models at iteration $t$, and starts at $E^{(0)} = 0$ due to our choice of initial weights in Tables \ref{tab:sgd_model_def} and \ref{tab:adam_model_def}. Furthermore, since $M$ preserves quantization bins, we have that $Q(w^{(t)}_{\hat Q}) = Q(w^{(t)}_{\hat Q})$ whenever $E^{(t)}$ is small. 

\subsection{Theorem Statements}
\label{sec:theorem_statements}

Theorem \ref{sgd_change_point} rigorously states contribution \ref{statement:main_non_adaptive} for the SGD update rule (Equation \ref{sgd}).  It states that after adjusting the learning rate of a model by $\alpha$ and re-initializing the weights by applying $M(w)$, a positive gradient estimator $\hat Q$ can be replaced by the STE with minimal differences in training.

\begin{theorem}
\label{sgd_change_point} Suppose that $E^{(t)}$ is the alignment error for $\hat Q$-net and $STE$-net with SGD (Table \ref{tab:sgd_model_def}). Assume that the following hold:
\begin{enumerate}[label=\thetheorem.\arabic*,ref=\thetheorem.\arabic*]
        \item  $0<L_- \leq |\hat Q'(w)| \leq L_+$ for all $w$. (\textbf{Bounded, positive gradient estimator})  \label{assumption_Q'}
        \item   $\hat Q'(w) $ is $ L'$-Lipschitz. (\textbf{Well-behaved gradient estimator}) \label{assumption_Q''}
    \end{enumerate}
Then we have
\begin{equation}
    \label{eq:E_bound_sgd}
    E^{(t+1)} \leq  E^{(t)} + \underbrace{\eta \alpha \left| \nabla f_{\hat Q}^{(t)}  -   \nabla f_{STE}^{(t)}   \right|}_{\text{gradient error}} + \underbrace{\frac{L'}{2} \cdot \left(\frac{\eta L_+\nabla f_{\hat Q}^{(t)}}{L_-}\right)^2}_{\text{convexity error}}
\end{equation}

\end{theorem}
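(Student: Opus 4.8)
The plan is to track how the ``mirror image'' $M\!\left(w_{\hat Q}^{(t)}\right)$ of the $\hat Q$-net weight moves under one SGD step and to compare it against the STE update of $w_{STE}^{(t)}$. The map $M$ is built precisely so that it conjugates the $\hat Q$ dynamics into the STE dynamics to first order: differentiating Equation \ref{eq:alpha_M_def} gives $M'(w) = \alpha/\hat Q'(w)$, while the $\hat Q$-net step is $w_{\hat Q}^{(t+1)} = w_{\hat Q}^{(t)} + \delta$ with $\delta := -\eta\,\nabla f_{\hat Q}^{(t)}\,\hat Q'\!\left(w_{\hat Q}^{(t)}\right)$. Hence the linear part of the motion is $M'\!\left(w_{\hat Q}^{(t)}\right)\delta = -\alpha\eta\,\nabla f_{\hat Q}^{(t)}$, which has exactly the same form as the STE step $w_{STE}^{(t+1)} = w_{STE}^{(t)} - \alpha\eta\,\nabla f_{STE}^{(t)}$. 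This is the algebraic content of the ``funhouse mirror'': after the learning rate is rescaled by $\alpha$, the leading-order displacements coincide and differ only through the gradients $\nabla f_{\hat Q}^{(t)}$ versus $\nabla f_{STE}^{(t)}$.

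First I would expand $M\!\left(w_{\hat Q}^{(t+1)}\right) = M\!\left(w_{\hat Q}^{(t)}\right) + M'\!\left(w_{\hat Q}^{(t)}\right)\delta + R$, subtract the two update rules, and apply the triangle inequality to obtain
\begin{equation}
E^{(t+1)} \le \left| M\!\left(w_{\hat Q}^{(t)}\right) - w_{STE}^{(t)} \right| + \alpha\eta\left| \nabla f_{\hat Q}^{(t)} - \nabla f_{STE}^{(t)} \right| + |R|.
\end{equation}
The first term is $E^{(t)}$ and the second is the gradient-error term of Equation \ref{eq:E_bound_sgd}, so the whole statement reduces to bounding the Taylor remainder $|R|$ by the convexity-error term.

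For the remainder I would \emph{not} differentiate $M$ twice, since Assumption \ref{assumption_Q''} only provides that $\hat Q'$ is $L'$-Lipschitz. Instead I would use the integral form $R = \int_0^\delta \left[ M'\!\left(w_{\hat Q}^{(t)} + r\right) - M'\!\left(w_{\hat Q}^{(t)}\right) \right] dr$. Writing $M' = \alpha/\hat Q'$, the integrand equals $\alpha\bigl(\hat Q'(w) - \hat Q'(w+r)\bigr)/\bigl(\hat Q'(w)\,\hat Q'(w+r)\bigr)$; Assumption \ref{assumption_Q'} bounds the denominator below by $L_-^2$ and Assumption \ref{assumption_Q''} bounds the numerator by $L'|r|$, so that $\left| M'(w+r) - M'(w) \right| \le \alpha L' |r|/L_-^2$. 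Integrating in $r$ yields $|R| \le \tfrac{\alpha L'}{2 L_-^2}\,\delta^2$, and bounding the step through $|\delta| \le \eta L_+ \left| \nabla f_{\hat Q}^{(t)} \right|$ (again Assumption \ref{assumption_Q'}) recovers the convexity-error term $\tfrac{L'}{2}\bigl(\eta L_+ \nabla f_{\hat Q}^{(t)}/L_-\bigr)^2$, up to the prefactor $\alpha$.

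The main obstacle is exactly this remainder estimate. Because $\hat Q'$ is merely Lipschitz and not necessarily $C^1$, the second-order control cannot come from a second derivative and must instead be extracted from the variation of $1/\hat Q'$; this is where the two-sided bound $L_- \le |\hat Q'| \le L_+$ is essential, both to keep $1/\hat Q'$ well defined and to limit how fast it changes. Everything else is one-step bookkeeping: note that $E^{(0)} = 0$ by the matched initialization in Table \ref{tab:sgd_model_def}, so iterating this recursion would accumulate the per-step gradient and convexity errors, although the statement here asserts only the single-step inequality.
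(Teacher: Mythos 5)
Your proposal is correct and takes essentially the same route as the paper: the paper's own informal proof is precisely your computation — Taylor-expand $M$ about $w_{\hat Q}^{(t)}$ using $M'(w)=\alpha/\hat Q'(w)$ (Equation \ref{eq:M_derivative}), apply the triangle inequality to split off $E^{(t)}$ and the gradient-error term, and bound the remainder via the Lipschitz continuity of $M'$ with constant of order $L'/L_-^2$ — while the formal version merely routes these same steps through the general Theorem \ref{sgd_change_point_generalization} (whose Assumption \ref{assumption_limit_tndependence} is trivially satisfied with $c(\eta)=0$ for plain SGD), citing a descent-lemma-type result where you use the equivalent explicit integral form of the remainder, which is the right way to avoid assuming $\hat Q'\in C^1$. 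One point in your favor: your careful bookkeeping yields $|R|\le \tfrac{\alpha L'}{2L_-^2}\,\delta^2$, and the paper's own verification that $M'$ is $L'/L_-^2$-Lipschitz silently drops the factor $\alpha$ (it writes the difference as $\bigl|1/\hat Q'(w)-1/\hat Q'(v)\bigr|$ even though $M'=\alpha/\hat Q'$), so the convexity-error constant in Equation \ref{eq:E_bound_sgd} is really only valid up to the prefactor $\alpha\in[L_-,L_+]$ that you flagged; since $\alpha$ is a harmonic-mean-type normalization of $\hat Q'$ over a bin it is typically close to $1$, which does not affect the $O(\eta^2)$ character of the term, but your version of the bound is the accurate one.
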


    See Appendix \ref{app:sgd_change_point_proof} for a rigorous proof. The theorem only considers the standard gradient descent process. For a similar statement for a more general class of non-adaptive learning rate optimizers, see Appendix \ref{app:sgd_change_point_proof}. See  Appendix \ref{app:sgd_change_point_momentum} for a more specific result for SGD with momentum.

Theorem
\ref{adam_change_point} rigorously proves contribution \ref{statement:main_adaptive} for the Adam update rule (Equations \ref{eq:m_recurrence}-\ref{eq:adam_def}). The result here is stronger than Theorem \ref{sgd_change_point}. When using the Adam update rule, the gradient estimator $\hat Q$ can be replaced by the STE \textit{without} any update to the learning rate or weight initialization.

\begin{theorem}
\label{adam_change_point} Suppose that $E^{(t)}$ is the alignment error for $\hat Q$-net and $STE$-net with Adam (Table \ref{tab:adam_model_def}).  Assume that the following hold:
    \begin{enumerate}[label=\thetheorem.\arabic*,ref=\thetheorem.\arabic*]
        \item  \label{assumption_Q'_adam} $0 < L_- \leq  \hat Q'(w)$ for all $w$. (\textbf{Lower bounded positive gradient estimator})
        \item  \label{assumption_Q''_adam} $\hat Q'(w) $ is $ L'$-Lipschitz. (\textbf{Well-behaved gradient estimator})
    \end{enumerate}
    Then we have
\begin{equation}
    \label{eq:E_bound_adam}
    E^{(t+1)} \leq  E^{(t)} + \underbrace{ \left| g^{(t)}(\nabla f^{(0)}_{\hat Q}, \ldots, 
    \nabla f^{(t)}_{\hat Q},
    \eta)  -   g^{(t)}(\nabla f^{(0)}_{STE}, \ldots, 
    \nabla f^{(t)}_{STE},
    \eta)   \right|}_{\text{gradient error}}  + \underbrace{O(\eta^2)}_{\text{convexity error}},
\end{equation}
where $g^{(t)}$ is the gradient update rule for Adam (see Equation \ref{general_learning_rule} and Equations \ref{eq:m_recurrence}-\ref{eq:adam_def}).

\end{theorem}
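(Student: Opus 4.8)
The plan is to reduce the whole bound to a single statement about the approximate scale invariance of Adam. Writing $U_{\hat Q}$ and $U_{STE}$ for the two weight increments, the update rule \ref{general_learning_rule} gives $w^{(t+1)}_{\hat Q}=w^{(t)}_{\hat Q}+U_{\hat Q}$ and $w^{(t+1)}_{STE}=w^{(t)}_{STE}+U_{STE}$, so by the triangle inequality $E^{(t+1)}\le E^{(t)}+|U_{\hat Q}-U_{STE}|$. Because the STE has derivative $1$, the $STE$-net increment is \emph{exactly} $U_{STE}=g^{(t)}(\nabla f^{(0)}_{STE},\dots,\nabla f^{(t)}_{STE},\eta)$, i.e. the second Adam update in the claimed gradient-error term is literally the true $STE$-net step. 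A second triangle inequality then reduces the theorem to the single estimate $\bigl|U_{\hat Q}-g^{(t)}(\nabla f^{(0)}_{\hat Q},\dots,\nabla f^{(t)}_{\hat Q},\eta)\bigr|=O(\eta^2)$: the genuine $\hat Q$-net increment, which feeds Adam the factored gradients $\nabla f^{(s)}_{\hat Q}\,\hat Q'(w^{(s)}_{\hat Q})$, must agree to order $\eta^2$ with the Adam update that ignores the $\hat Q'$ factors entirely.

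First I would record the exact scale invariance of Adam: multiplying every past gradient by a single positive constant $c$ rescales the first moment $m^{(t)}$ by $c$ and $\sqrt{v^{(t)}}$ by $c$, leaving the normalized step $m^{(t)}/\sqrt{v^{(t)}}$ unchanged (modulo the stabilizer $\epsilon$, which I treat as negligible or set to $0$, consistent with its role). Here the per-step factors $c_s:=\hat Q'(w^{(s)}_{\hat Q})$ are not constant, but they vary slowly: since each Adam increment is normalized, $|U|\le C\eta$ with $C$ depending only on $\beta_1,\beta_2$ (the standard bound $|\hat m|/\sqrt{\hat v}=O(1)$), whence $|w^{(s)}_{\hat Q}-w^{(t)}_{\hat Q}|\le C\eta\,(t-s)$. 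Assumption \ref{assumption_Q''_adam} ($\hat Q'$ is $L'$-Lipschitz) then gives $c_s=c_t+O(\eta(t-s))$, while Assumption \ref{assumption_Q'_adam} ($\hat Q'\ge L_->0$) keeps $c_t$ bounded away from zero and of fixed sign, which is exactly what lets the leading factor $c_t$ cancel in the ratio.

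The technical heart, which I expect to be the main obstacle, is to propagate these per-step perturbations through the nonlinear ratio $m^{(t)}/\sqrt{v^{(t)}}$. Writing $c_s=c_t(1+\rho_s)$ with $|\rho_s|\le (L'C/L_-)\,\eta\,(t-s)$, the factored moments become $m^{(t)}=c_t(\hat m_0+\hat m_\rho)$ and $v^{(t)}=c_t^{2}(\hat v_0+\hat v_\rho)$, where $\hat m_0,\hat v_0$ are the unfactored Adam moments and $|\hat m_\rho|,|\hat v_\rho|=O(\eta)$ because the exponentially weighted quantities $\sum_s\beta^{t-s}(t-s)$ are finite (using that the loss gradients $\nabla f^{(s)}$ are bounded). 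The positive prefactor $c_t$ cancels between numerator and denominator, leaving $m^{(t)}/\sqrt{v^{(t)}}=(\hat m_0+\hat m_\rho)/\sqrt{\hat v_0+\hat v_\rho}$, and a first-order expansion around $(\hat m_0,\hat v_0)$ shows this differs from $\hat m_0/\sqrt{\hat v_0}$ by $O(\eta)$. The delicate points are (i) justifying the square-root expansion when $\hat v_0$ can be small, resolved by keeping a strictly positive $\epsilon$ or by noting $|\hat m_0|\le\sqrt{\hat v_0}$ so the quotient stays controlled, and (ii) ensuring the hidden constant is uniform in $t$, which follows from the geometric decay of $\beta_1^{t-s},\beta_2^{t-s}$ and the bias-correction factors $1-\beta_j^{\,t+1}$ being bounded below. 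Multiplying the resulting $O(\eta)$ discrepancy in the normalized step by the leading $\eta$ of Adam's update yields the claimed $O(\eta^2)$ term and completes the bound.
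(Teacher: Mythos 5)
Your proposal is correct and takes essentially the same route as the paper: the same two triangle inequalities (the paper's Theorem \ref{adam_change_point_generalization}) reduce everything to the single estimate that the Adam step with factored gradients $\nabla f^{(i)}_{\hat Q}\hat Q'(w^{(i)})$ agrees to $O(\eta^2)$ with the unfactored step, and you establish this exactly as the paper does — via the inherent bound on Adam increments, the $L'$-Lipschitz bound on $\hat Q'$, and the lower bound $L_-$ letting $\hat Q'(w^{(t)})$ cancel from the scale-invariant ratio $m_t/\sqrt{v_t}$ (with $\epsilon$ set to zero and the vanishing-denominator case handled trivially, just as in the paper). The only difference is bookkeeping: the paper propagates the perturbation through $\log\hat Q'$ and exponentiates (its Theorem \ref{thm:sgd_change_point_momentum_limit}), whereas you linearize $c_s = c_t(1+\rho_s)$ and control $\sum_s \beta^{t-s}(t-s)$ directly — an equivalent, arguably slightly cleaner, treatment of the same uniformity-in-$t$ issue.
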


    See Appendix \ref{app:adam} for a rigorous proof. In Theorem \ref{adam_change_point}, the exact definition of the $O(\eta^2)$ term is omitted due to its complexity. For a similar statement for a more general class of non-adaptive learning rate optimizers (not just the Adam optimizer), see Appendix \ref{app:adam}. For a discussion of Theorems \ref{sgd_change_point} and \ref{adam_change_point} for  learning rate schedules, see Appendix \ref{appendix_lr_schedules}.

\subsection{On the Assumptions and Implications of Theorems \ref{sgd_change_point} and \ref{adam_change_point}} 


Theorems \ref{sgd_change_point} and \ref{adam_change_point} rely on specific assumptions about the gradient estimator $\hat Q$. 
In this section, we break down these assumptions clearly.
Furthermore, we describe how these theorems imply contributions \ref{statement:main_non_adaptive} and \ref{statement:main_adaptive}.

\textbf{The assumptions are reasonable:} The upper bound on $\hat Q'$ in Assumption  \ref{assumption_Q'} is very mild. Gradient estimators with an unbounded derivative would likely cause training instability, and are not  used in practice.  Similarly, the authors are not aware of a gradient estimator that breaks Assumptions \ref{assumption_Q''} and \ref{assumption_Q''_adam}. In addition, the constants $L_-$, $L_+$, and $L'$ are usually quite small in practice (see Appendix \ref{app:constant_calculation} for calculations). 
The lower bound on $\hat Q'$ in Assumptions \ref{assumption_Q'} and \ref{assumption_Q'_adam}, however,  is often broken in practice. In Appendix \ref{app:pwl}, we describe how the Theorems still support contributions \ref{statement:main_non_adaptive} and \ref{statement:main_adaptive} in these cases.

\textbf{The bounds in Equations \ref{eq:E_bound_sgd} and \ref{eq:E_bound_adam} are small:} In order to see how Theorems \ref{sgd_change_point} and \ref{adam_change_point} provide contributions \ref{statement:main_non_adaptive} and \ref{statement:main_adaptive}, we can closely examine each term in Equations \ref{eq:E_bound_sgd} and \ref{eq:E_bound_adam}. The gradient and convexity error in each equation together give a worst-case increase to $E^{(t)}$ at each training step. That is, as long as these terms are small, $\hat Q$-net and $STE$-net will train in a very similar manner. The convexity error terms are unavoidable errors, and are extremely small ($O(\eta^2)$) in practice. The gradient error terms, however, are $O(\eta)$, so they can be large if the gradients of the two models are misaligned. However, since the gradient terms $\nabla f_{\hat Q}^{(t)}$ and $\nabla f_{STE}^{(t)}$ only depend on quantized weights, these terms will be zero at the beginning of training and remain small as long as $E^{(t)}$ remains small. 

\textbf{The claim is nontrivial:} Note that these theorems do \textit{not} simply say that when the learning rate is small, the models change very little, and therefore $\hat Q$-net and $STE$-net are aligned. Since the irreducible error term is quadratic in $\eta$, the misalignment at each step is small \textit{relative to the learning rate itself}. 

\textbf{The claim applies to networks of any size:} The Theorems only give bounds for the error in a single network weight, but can be applied to each weight independently in a multi-weight network. Of course, the trajectories of weights in a neural network are not independent, but luckily in our case the weight trajectories only depend on the quantized versions of the other network weights. To see this, note that the only terms in Equations \ref{eq:E_bound_sgd} and \ref{eq:E_bound_adam} that depend on other network weights are the gradient error terms. As stated earlier, these gradient terms only depend on quantized weights, so we do not need perfect alignment in other network weights in order to keep the error terms in these Equations small. Since the gradient error terms can depend on all other quantized weights in the network, larger models are at a greater risk of weight misalignment. 
However, this is more a property of large models than of gradient estimators: any two large models that have only a small difference in hyperparameter configurations but otherwise equivalent training setups will have potentially large step-by-step divergences in weight alignment. And the fundamental difference in training induced by a gradient estimator is indeed small, since  in Equations \ref{eq:E_bound_sgd} and \ref{eq:E_bound_adam}, the true source of all misalignment is an $O(\eta^2)$ term. This is supported by our experiments in Section \ref{sec:experiments}.

\section{Experimental Results}

\label{sec:experiments}

Here we demonstrate our main results on practical models.
The general strategy we will take is to implement $\hat Q$-net and $STE$-net for a specific model architecture and  compare  on a variety of metrics to demonstrate the following:
\begin{enumerate}
  \renewcommand{\labelenumi}{\Alph{enumi}.}
  \renewcommand{\theenumi}{\Alph{enumi}}
    \item $\hat Q$-net and $STE$-net train in almost exactly the same way.  \label{statement:models_are_same}
    \item If we do not apply the weight re-initialization of Theorem \ref{sgd_change_point}, we do not see the same results.  \label{statement:no_reinitialization}
\end{enumerate}

\subsection{Models and Training Setup} \label{sec:models}

\textbf{Models and Quantizers}. We use two model architecture/dataset pairs:
\begin{enumerate}
    \item A simple three-layer quantized convoluational archicture proposed in \cite{chollet2021deep} for image classification on the MNIST dataset,
which gives a uniform weight distribution with the variance recommended in \cite{he2015delving} trained on a CPU.
    \item ResNet50 \cite{he2016deep} on the ILSVRC 2012 ImageNet dataset \cite{deng2009imagenet}, which showcases generality to a more complex model and dataset trained on a TPU. We used a fully deterministic version of the Flax example library \cite{flax_imagenet_example}.
\end{enumerate}

\textbf{Gradient estimator and Optimizers:} We quantize weights using a 2-bit uniform quantizer, and for gradient estimation, we use the $\hat Q$ given by the HTGE formula \cite{pei2023quantization}. See Appendix \ref{app:custom_gradients} for our justification of this choice.  For optimization techniques on both models, we consider both SGD and Adam. We use a learning rate of  0.001 for  SGD, and 0.0001 for Adam.  All models are trained with weight initialization and learning rate adjustments given by Tables \ref{tab:sgd_model_def} and \ref{tab:adam_model_def}.
  For more details on the training recipe and quantizers, see Appendix \ref{app:experiment_details}.

\subsection{Metrics.} 

\label{sec:metrics}

We use two metrics in order to establish Points \ref{statement:models_are_same} and \ref{statement:no_reinitialization}. Both of these compare $STE$-net weights to $\hat Q$-net weights. In addition to the metrics below, we also report accuracy and loss statistics for all models. 


\textbf{Quantized Weight Agreement.} At the end of training the complete set of quantized weights is calculated for both models and compared. We report the proportion of quantized weights that are the same for both models. 

\textbf{Normalized Weight Alignment Error ($\mathbf{\bar E}$).} For each pair of models, we compute the average value of $E^{(T)}$ for the final training step $T$ over all weights. Note that Equation \ref{eq:E_def} gives two definitions of $E$, and for each model pair we use the version that matches the weight initialization setup, which gives $E^{(0)}= 0$ for all model pairs.  Each $E^{(T)}$ is normalized by the length of the representable range, so that a value of 100\% indicates that the two models' weights are on opposite sides of the representable range. We denote the average as $\bar E$ for all model pairs.




\subsection{Results}

\begin{table*}[t]
\centering
\begin{tabular}{m{2.0cm}m{3.1cm}m{1.1cm}m{6cm}}
\hline
\begin{minipage}{2.0cm} \vspace{1mm} \centering \textbf{Experiment Name} \vspace{1mm} \end{minipage} & 
\begin{minipage}{3.1cm} \vspace{1mm} \centering \textbf{Experiment Description} \vspace{1mm} \end{minipage} & 
\begin{minipage}{1.1cm} \vspace{1mm} \centering \textbf{$\mathbf{\bar E}$} \vspace{1mm} \end{minipage} & 
\begin{minipage}{6cm} \vspace{1mm} \centering \textbf{Interpretation/Comparison to Baseline}  \vspace{1mm} \end{minipage} \\ 
\Xhline{3\arrayrulewidth}
\begin{minipage}{2.0cm} \vspace{1mm} \centering baseline \vspace{1mm} \end{minipage} & 
\begin{minipage}{3.1cm} \vspace{1mm} \centering $\hat Q$ vs. STE  \vspace{1mm} \end{minipage} & 
\begin{minipage}{1.1cm} \vspace{1mm} \centering 0.515\% 
\end{minipage} & 
\begin{minipage}{6cm} \vspace{1mm} \centering Baseline \vspace{1mm} \end{minipage} \\ 
\Xhline{3\arrayrulewidth}
\begin{minipage}{2.0cm} \vspace{1mm} \centering lr-tweak \vspace{1mm} \end{minipage} & 
\begin{minipage}{3.1cm} \vspace{1mm} \centering $\hat Q$ vs. $\hat Q$ with 1\% learning rate increase \vspace{1mm} \end{minipage} & 
\begin{minipage}{1.1cm} \vspace{1mm} \centering 0.572\% 
\end{minipage} & 
\begin{minipage}{6cm} \vspace{1mm} \centering Replacing $STE$-net with $\hat Q$-net is about as impactful as a small change to $\eta$ (\ref{statement:models_are_same}). \vspace{1mm} \end{minipage} \\ 
\hline
\begin{minipage}{2.0cm} \vspace{1mm} \centering unadjusted 
\end{minipage} & 
\begin{minipage}{3.1cm} \vspace{1mm} \centering $\hat Q$ vs. STE \textit{without} reinitializing weights \vspace{1mm} \end{minipage} & 
\begin{minipage}{1.1cm} \vspace{1mm} \centering 2.52\% \vspace{1mm} \end{minipage} & 
\begin{minipage}{6cm} \vspace{1mm} \centering The two models only see the same weight movement if weights are re-initialized according to $M$ (\ref{statement:no_reinitialization}). \vspace{1mm} \end{minipage} \\ 
\hline
\end{tabular}
\caption{Normalized weight alignment metric $\bar E$ for MNIST model with SGD + Momentum, including  descriptions and interpretations for all four experiment types. This table serves as a guide for interpreting Table \ref{tab:all_alignment}.}
\label{tab:verbose}
\vspace{-3mm}
\end{table*}

\begin{table}[htbp]
\centering
\begin{minipage}{0.48\textwidth}
\begin{tabular}{|m{2.1cm}m{1.7cm}m{1.7cm}|} \hline
\textbf{Experiment Name} & $\mathbf{\bar E} $   &  \textbf{Quantized Weight Agreement} \\ \Xhline{3\arrayrulewidth}
baseline (S) & 0.515\% & 98.31\% \\ \Xhline{3\arrayrulewidth}
lr-tweak (S) & 0.572\% & 98.66\% \\ 
unadjusted (S) & 2.52\% & 96.53\% \\  \Xhline{3\arrayrulewidth}
baseline (A) & 2.81\% & 94.42\% \\ \Xhline{3\arrayrulewidth}
lr-tweak (A) & 1.74\% & 95.4\% \\ 
\hline
\end{tabular}
\label{tab:resnet_alignment}
\end{minipage}
\hfill
\begin{minipage}{0.48\textwidth}
\begin{tabular}{|m{2.1cm}m{1.7cm}m{1.7cm}|} \hline
\textbf{Experiment Name} & $\mathbf{\bar E} $   &  \textbf{Quantized Weight Agreement} \\ \Xhline{3\arrayrulewidth}
baseline (S) & 5.42\% & 68.94\% \\ \Xhline{3\arrayrulewidth}
lr-tweak (S) & 5.46\% & 75.64\% \\ 
unadjusted (S) & 7.88\% & 67.53\% \\ \Xhline{3\arrayrulewidth}
baseline (A) & 7.18\% & 72.22\% \\ \Xhline{3\arrayrulewidth}
lr-tweak (A) & 4.99\% & 76.32\% \\ 
\hline
\end{tabular}
\end{minipage}
\vspace{1mm}
\caption{Alignment metrics  for SGD (S) and Adam (A). Results for the MNIST model are shown on the left, and results for ResNet50 trained on ImageNet are shown on the right. }
\label{tab:all_alignment}
\vspace{-4mm}
\end{table}

\begin{table}[h ]
\centering
\setlength{\tabcolsep}{2.5pt}
\begin{minipage}{0.48\textwidth}
\begin{tabular}{|lcccc|}
\hline
 & Train acc & Train loss & Val acc & Val loss \\
\hline
 STE (S)       & 97.05\% & 0.1439 &  97.08\%   &  0.1417     \\
$\hat Q$  (S)    & 96.98\% & 0.1483 & 97.14\%     &  0.1468     \\
\textbf{Diff} &   \textbf{-0.06\%}  &   \textbf{0.0044}   &   \textbf{0.06\%}  &  \textbf{0.0051} \\
\hline
 STE  (A)      & 97.56\% & 0.1270 &  97.66\%   &  0.1257     \\
$\hat Q$  (A)    & 97.63\% & 0.1254 & 97.58\%     &  0.1245     \\
\textbf{Diff} &   \textbf{0.07\%}  &   \textbf{-0.0016}   &   \textbf{-0.08\%}  &  \textbf{-0.0013} \\
\hline
\end{tabular}
\label{tab:resnet_alignment}
\end{minipage}
\hfill
\begin{minipage}{0.48\textwidth}
\centering
\begin{tabular}{|lcccc|}
\hline
 & Train acc & Train loss & Val acc & Val loss \\
\hline
 STE (S)        & 68.94\% & 1.3370 &  69.83\%   &  1.2227     \\
 $\hat Q$ (S)    & 68.51\% & 1.3365 & 68.77\%     &  1.2793     \\
 \textbf{Diff} &   \textbf{0.43\%}  &   \textbf{0.0005}   &   \textbf{-1.06\%}  &  \textbf{-0.0566} \\
\hline 
 STE (A)        & 69.78\% & 1.2876 &  70.01\%   &  1.2209     \\
 $\hat Q$  (A)   & 69.02\% & 1.3153 & 69.37\%     &  1.2490     \\
 \textbf{Diff} &   \textbf{-0.77\%}  &   \textbf{0.0277}   &   \textbf{-0.65\%}  &  \textbf{0.0281} \\
\hline
\end{tabular}
\label{tab:loss_and_acc_differences_Adam_resnet_True}
\end{minipage}
\caption{Loss and Accuracy differences between $\hat Q$-net and $STE$-net with SGD (S) and Adam (A). Results for the MNIST model are shown on the left, and results for ResNet50 trained on ImageNet are shown on the right. For both SGD (S) and Adam (A) and both models, differences are small.}
\vspace{-4mm}
\label{tab:all_accuracy}
\end{table}

\begin{figure}[h] 
  \centering
  \begin{subfigure}[b]{0.48\textwidth}
    \includegraphics[width=\linewidth]{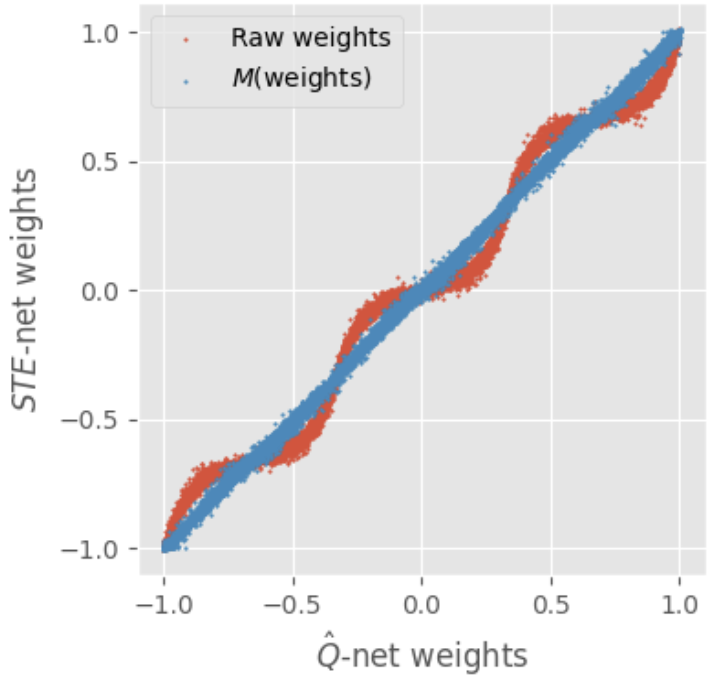}
    \caption{$\hat Q$-net weights vs $STE$-net weights for MNIST convolutional model at the conclusion of training for default SGD.}
    \label{fig:sgd_wa}
  \end{subfigure}
  \hfill
  \begin{subfigure}[b]{0.48\textwidth}
    \includegraphics[width=0.95\linewidth]{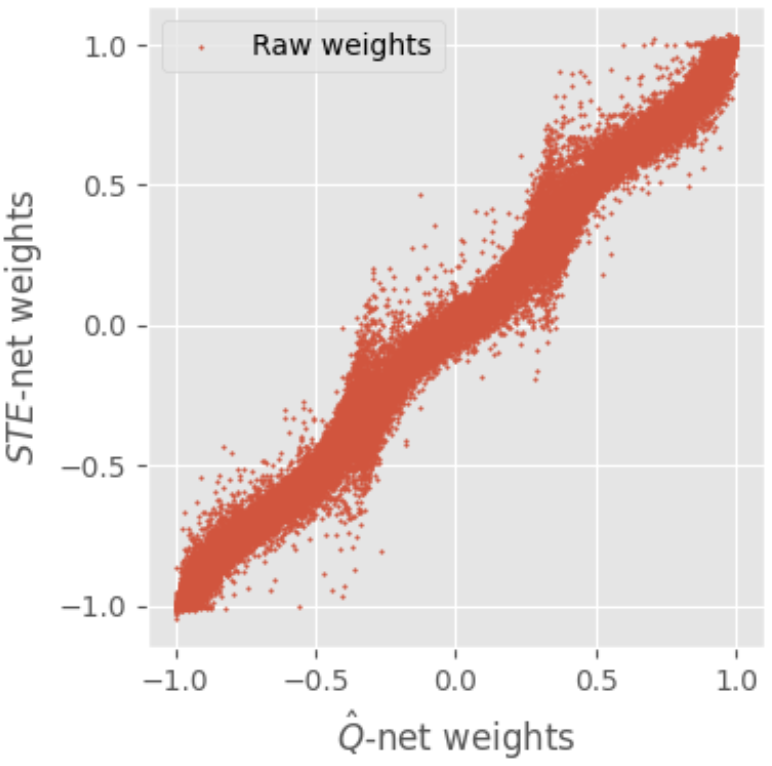}
    \caption{$\hat Q$-net weights vs $STE$-net weights at the conclusion of training \textit{without} re-initializing $STE$-net weights.}
    \label{fig:sgd_no_warp}
  \end{subfigure}
\end{figure}

\textbf{Tables for Points \ref{statement:models_are_same} and \ref{statement:no_reinitialization}:} We provide all metrics for both the default SGD and Adam models described in Section \ref{sec:models} within in Table \ref{tab:all_alignment},  with detailed interpretations for the $\bar E$ metric in Table \ref{tab:verbose}. Note that Adam does not have an ``unadjusted" case, since there is no need for weight initialization adjustment when Adam is used.  

\textbf{Point \ref{statement:models_are_same} is validated.} The standard comparison between $\hat Q$-net and $STE$-net is labeled as ``baseline".  We compute metrics between a $\hat Q$-net model and the same model with a learning rate increase of 1\% (chosen arbitrarily and only once), reported with the label ``lr-tweak". This serves as an example of a ``small change" to a model that the reader may be more familiar with, providing additional context about the scale of the metric results and supporting Point \ref{statement:models_are_same}. For both the MNIST and ImageNet models, the alignment between $\hat Q$-net and $STE$-net is similar to  the alignment expected from a 1\% learning rate change. 


\textbf{Point \ref{statement:no_reinitialization} is validated.} We report alignment measurements between $\hat Q$-net and $STE$-net \textit{without} the weight and learning rate adjustments described in Theorem \ref{sgd_change_point} using the label ``unadjusted". The alignment worsens for both the MNIST model and the ResNet model when removing the weight reinitialization by $M$. 

\textbf{Weight Alignment.} For a visual of the weight alignment phenomenon, see Figures \ref{fig:sgd_wa} and \ref{fig:sgd_no_warp}.

\textbf{There is almost no difference in training accuracy.}
Standard training metrics for both $\hat Q$-net and $STE$-net are given in Table \ref{tab:all_accuracy} for both optimizers and both models we consider. This table shows that the two models have very similar train and test metrics, indicating that replacing $\hat Q$  with the STE  is of minimal impact after applying the appropriate weight initialization and learning rate adjustments. As expected, the alignment is stronger for the smaller model.

\section{Implications}

Here we discuss the implications of this work on the existing literature and future practice and research. 

\textbf{For practitioners.} The main message for practitioners is simple, and depends on the optimization strategy used as follows:
\begin{itemize}
    \item \textbf{SGD and other non-adaptive optimizers:} In this case, if the learning rate is sufficiently small and you wish to tweak the gradient estimator, you can instead apply a corresponding weight re-initialization and learning rate adjustment to a model with the STE or PWL estimator and see nearly the same training procedure. The proof and related assumptions are given in Theorem \ref{sgd_change_point_generalization}.
    \item \textbf{Adam and other adaptive optimizers:} In this case, when the learning rate is sufficiently small, the only gradient estimators you need consider are the STE and PWL estimators. The proof and related assumptions are given in Theorem \ref{adam_change_point_generalization}.
\end{itemize}

\textbf{For researchers.} For future research, we hope that this work will inspire further study on processes for updating quantized model parameters that are fundamentally different from the use of gradient estimators, and therefore immune to the arguments of this paper. This may include novel computations on gradients that diverge from the standard chain rule \cite{lee2021network,wangl2023new}, optimizers specially designed for QAT \cite{helwegen2019latent}, or even methods that do not involve gradient computations at all \cite{takemoto2023learning}. As for the existing literature, our message is that 
the concern about ``gradient error" should not be considered in the future.

\textbf{Why are so many gradient estimators published?} A natural question that a reader may have concerning past research is this: If the choice of gradient estimator is so irrelevant, why is there so much research that proposes new gradient estimators and demonstrates improved performance with their aid? There are several potential answers to this. 
The simplest explanation is that their gradient estimation techniques happen to have implictly uncovered a superior weight re-initialization and learning rate adjustment, as indicated by Theorem \ref{sgd_change_point}. 
The more applicable answer is that 
nearly all of these studies propose more than simply a new gradient estimator (as described in Appendix \ref{app:custom_gradients}), and so the results can be due to multiple different contributions. 
Another answer could be that the performance improvements were due to changes in quantized activation gradient estimators, which cannot be equated to the STE. A final answer could be that the learning rates in their experiments were too high to see an equivalence between their gradient estimators and the STE. This is a limitation of our main argument, but we expect that this counter-argument will not stand the test of time, since by our main results, the higher learning rate masks the fact that models with novel $\hat Q$ and the STE are still approximating the same process.


\newpage 

\bibliography{references}

\begin{thebibliography}{10}

\bibitem{andrychowicz2016learning}
Marcin Andrychowicz, Misha Denil, Sergio Gomez, Matthew~W Hoffman, David Pfau,
  Tom Schaul, Brendan Shillingford, and Nando De~Freitas.
\newblock Learning to learn by gradient descent by gradient descent.
\newblock {\em Advances in neural information processing systems}, 29, 2016.

\bibitem{bengio2013estimating}
Yoshua Bengio, Nicholas L{\'e}onard, and Aaron Courville.
\newblock Estimating or propagating gradients through stochastic neurons for
  conditional computation.
\newblock {\em arXiv preprint arXiv:1308.3432}, 2013.

\bibitem{choi2018pact}
Jungwook Choi, Zhuo Wang, Swagath Venkataramani, Pierce I-Jen Chuang,
  Vijayalakshmi Srinivasan, and Kailash Gopalakrishnan.
\newblock Pact: Parameterized clipping activation for quantized neural
  networks.
\newblock {\em arXiv preprint arXiv:1805.06085}, 2018.

\bibitem{chollet2021deep}
Francois Chollet.
\newblock {\em Deep learning with Python}.
\newblock Simon and Schuster, 2021.

\bibitem{darabi2018regularized}
Sajad Darabi, Mouloud Belbahri, Matthieu Courbariaux, and Vahid~Partovi Nia.
\newblock Regularized binary network training.
\newblock {\em arXiv preprint arXiv:1812.11800}, 2018.

\bibitem{darken1992learning}
Christian Darken, Joseph Chang, John Moody, et~al.
\newblock Learning rate schedules for faster stochastic gradient search.
\newblock In {\em Neural networks for signal processing}, volume~2, pages
  3--12. Citeseer, 1992.

\bibitem{deng2009imagenet}
Jia Deng, Wei Dong, Richard Socher, Li-Jia Li, Kai Li, and Li~Fei-Fei.
\newblock Imagenet: A large-scale hierarchical image database.
\newblock In {\em 2009 IEEE conference on computer vision and pattern
  recognition}, pages 248--255. Ieee, 2009.

\bibitem{dettmers2023qlora}
Tim Dettmers, Artidoro Pagnoni, Ari Holtzman, and Luke Zettlemoyer.
\newblock Qlora: Efficient finetuning of quantized llms.
\newblock {\em arXiv preprint arXiv:2305.14314}, 2023.

\bibitem{esser2019learned}
Steven~K Esser, Jeffrey~L McKinstry, Deepika Bablani, Rathinakumar Appuswamy,
  and Dharmendra~S Modha.
\newblock Learned step size quantization.
\newblock {\em arXiv preprint arXiv:1902.08153}, 2019.

\bibitem{flax_imagenet_example}
The {Flax contributors}.
\newblock Flax imagenet example.
\newblock \url{https://github.com/google/flax/tree/main/examples/imagenet},
  2024.
\newblock Original implementation of ImageNet example in Flax.

\bibitem{DBLP:journals/corr/abs-2103-13630}
Amir Gholami, Sehoon Kim, Zhen Dong, Zhewei Yao, Michael~W. Mahoney, and Kurt
  Keutzer.
\newblock A survey of quantization methods for efficient neural network
  inference.
\newblock {\em CoRR}, abs/2103.13630, 2021.

\bibitem{dsq}
Ruihao Gong, Xianglong Liu, Shenghu Jiang, Tianxiang Li, Peng Hu, Jiazhen Lin,
  Fengwei Yu, and Junjie Yan.
\newblock Differentiable soft quantization: Bridging full-precision and low-bit
  neural networks.
\newblock In {\em 2019 {IEEE/CVF} International Conference on Computer Vision,
  {ICCV} 2019, Seoul, Korea (South), October 27 - November 2, 2019}, pages
  4851--4860. {IEEE}, 2019.

\bibitem{goyal2017accurate}
Priya Goyal, Piotr Doll{\'a}r, Ross Girshick, Pieter Noordhuis, Lukasz
  Wesolowski, Aapo Kyrola, Andrew Tulloch, Yangqing Jia, and Kaiming He.
\newblock Accurate, large minibatch sgd: Training imagenet in 1 hour.
\newblock {\em arXiv preprint arXiv:1706.02677}, 2017.

\bibitem{he2015delving}
Kaiming He, Xiangyu Zhang, Shaoqing Ren, and Jian Sun.
\newblock Delving deep into rectifiers: Surpassing human-level performance on
  imagenet classification.
\newblock In {\em Proceedings of the IEEE international conference on computer
  vision}, pages 1026--1034, 2015.

\bibitem{he2016deep}
Kaiming He, Xiangyu Zhang, Shaoqing Ren, and Jian Sun.
\newblock Deep residual learning for image recognition.
\newblock In {\em Proceedings of the IEEE conference on computer vision and
  pattern recognition}, pages 770--778, 2016.

\bibitem{helwegen2019latent}
Koen Helwegen, James Widdicombe, Lukas Geiger, Zechun Liu, Kwang-Ting Cheng,
  and Roeland Nusselder.
\newblock Latent weights do not exist: Rethinking binarized neural network
  optimization.
\newblock {\em Advances in neural information processing systems}, 32, 2019.

\bibitem{Hinton2012lecture}
Geoffrey Hinton.
\newblock {\em COURSERA: Neural networks for machine learning}, 2012.

\bibitem{hubara2016binarized}
Itay Hubara, Matthieu Courbariaux, Daniel Soudry, Ran El-Yaniv, and Yoshua
  Bengio.
\newblock Binarized neural networks.
\newblock {\em Advances in neural information processing systems}, 29, 2016.

\bibitem{jung2019learning}
Sangil Jung, Changyong Son, Seohyung Lee, Jinwoo Son, Jae-Joon Han, Youngjun
  Kwak, Sung~Ju Hwang, and Changkyu Choi.
\newblock Learning to quantize deep networks by optimizing quantization
  intervals with task loss.
\newblock In {\em Proceedings of the IEEE/CVF Conference on Computer Vision and
  Pattern Recognition}, pages 4350--4359, 2019.

\bibitem{kim2021distance}
Dohyung Kim, Junghyup Lee, and Bumsub Ham.
\newblock Distance-aware quantization.
\newblock In {\em Proceedings of the IEEE/CVF International Conference on
  Computer Vision}, pages 5271--5280, 2021.

\bibitem{kim2020position}
Jangho Kim, KiYoon Yoo, and Nojun Kwak.
\newblock Position-based scaled gradient for model quantization and pruning.
\newblock {\em Advances in neural information processing systems},
  33:20415--20426, 2020.

\bibitem{kingma2014adam}
Diederik~P Kingma and Jimmy Ba.
\newblock Adam: A method for stochastic optimization.
\newblock {\em arXiv preprint arXiv:1412.6980}, 2014.

\bibitem{lee2021network}
Junghyup Lee, Dohyung Kim, and Bumsub Ham.
\newblock Network quantization with element-wise gradient scaling.
\newblock In {\em {IEEE} Conference on Computer Vision and Pattern Recognition,
  {CVPR} 2021, virtual, June 19-25, 2021}, pages 6448--6457. Computer Vision
  Foundation / {IEEE}, 2021.

\bibitem{li2019exponential}
Zhiyuan Li and Sanjeev Arora.
\newblock An exponential learning rate schedule for deep learning.
\newblock {\em arXiv preprint arXiv:1910.07454}, 2019.

\bibitem{lin2020rotated}
Mingbao Lin, Rongrong Ji, Zihan Xu, Baochang Zhang, Yan Wang, Yongjian Wu,
  Feiyue Huang, and Chia-Wen Lin.
\newblock Rotated binary neural network.
\newblock {\em Advances in neural information processing systems},
  33:7474--7485, 2020.

\bibitem{liu2022nonuniform}
Zechun Liu, Kwang-Ting Cheng, Dong Huang, Eric~P Xing, and Zhiqiang Shen.
\newblock Nonuniform-to-uniform quantization: Towards accurate quantization via
  generalized straight-through estimation.
\newblock In {\em Proceedings of the IEEE/CVF Conference on Computer Vision and
  Pattern Recognition}, pages 4942--4952, 2022.

\bibitem{liu2018bi}
Zechun Liu, Baoyuan Wu, Wenhan Luo, Xin Yang, Wei Liu, and Kwang-Ting Cheng.
\newblock Bi-real net: Enhancing the performance of 1-bit cnns with improved
  representational capability and advanced training algorithm.
\newblock In {\em Proceedings of the European conference on computer vision
  (ECCV)}, pages 722--737, 2018.

\bibitem{loshchilov2016sgdr}
Ilya Loshchilov and Frank Hutter.
\newblock Sgdr: Stochastic gradient descent with warm restarts.
\newblock {\em arXiv preprint arXiv:1608.03983}, 2016.

\bibitem{loshchilov2017decoupled}
Ilya Loshchilov and Frank Hutter.
\newblock Decoupled weight decay regularization.
\newblock {\em arXiv preprint arXiv:1711.05101}, 2017.

\bibitem{nagel2021white}
Markus Nagel, Marios Fournarakis, Rana~Ali Amjad, Yelysei Bondarenko, Mart
  Van~Baalen, and Tijmen Blankevoort.
\newblock A white paper on neural network quantization.
\newblock {\em arXiv preprint arXiv:2106.08295}, 2021.

\bibitem{oh2021automated}
Sangyun Oh, Hyeonuk Sim, Sugil Lee, and Jongeun Lee.
\newblock Automated log-scale quantization for low-cost deep neural networks.
\newblock In {\em Proceedings of the IEEE/CVF Conference on Computer Vision and
  Pattern Recognition}, pages 742--751, 2021.

\bibitem{pei2023quantization}
Zehua Pei, Xufeng Yao, Wenqian Zhao, and Bei Yu.
\newblock Quantization via distillation and contrastive learning.
\newblock {\em IEEE Transactions on Neural Networks and Learning Systems},
  2023.

\bibitem{przewlocka2022power}
Dominika Przewlocka-Rus, Syed~Shakib Sarwar, H~Ekin Sumbul, Yuecheng Li, and
  Barbara De~Salvo.
\newblock Power-of-two quantization for low bitwidth and hardware compliant
  neural networks.
\newblock {\em arXiv preprint arXiv:2203.05025}, 2022.

\bibitem{qian1999momentum}
Ning Qian.
\newblock On the momentum term in gradient descent learning algorithms.
\newblock {\em Neural networks}, 12(1):145--151, 1999.

\bibitem{Qin_2020_CVPR}
Haotong Qin, Ruihao Gong, Xianglong Liu, Mingzhu Shen, Ziran Wei, Fengwei Yu,
  and Jingkuan Song.
\newblock Forward and backward information retention for accurate binary neural
  networks.
\newblock In {\em Proceedings of the IEEE/CVF Conference on Computer Vision and
  Pattern Recognition (CVPR)}, June 2020.

\bibitem{rastegari2016xnor}
Mohammad Rastegari, Vicente Ordonez, Joseph Redmon, and Ali Farhadi.
\newblock Xnor-net: Imagenet classification using binary convolutional neural
  networks.
\newblock In {\em European conference on computer vision}, pages 525--542.
  Springer, 2016.

\bibitem{robbins1951stochastic}
Herbert Robbins and Sutton Monro.
\newblock A stochastic approximation method.
\newblock {\em The annals of mathematical statistics}, pages 400--407, 1951.

\bibitem{rokh2022comprehensive}
Babak Rokh, Ali Azarpeyvand, and Alireza Khanteymoori.
\newblock A comprehensive survey on model quantization for deep neural
  networks.
\newblock {\em arXiv preprint arXiv:2205.07877}, 2022.

\bibitem{ruder2016overview}
Sebastian Ruder.
\newblock An overview of gradient descent optimization algorithms.
\newblock {\em arXiv preprint arXiv:1609.04747}, 2016.

\bibitem{sakr2018true}
Charbel Sakr, Jungwook Choi, Zhuo Wang, Kailash Gopalakrishnan, and Naresh
  Shanbhag.
\newblock True gradient-based training of deep binary activated neural networks
  via continuous binarization.
\newblock In {\em 2018 IEEE International Conference on Acoustics, Speech and
  Signal Processing (ICASSP)}, pages 2346--2350. IEEE, 2018.

\bibitem{DBLP:conf/icml/SakrDVZDK22}
Charbel Sakr, Steve Dai, Rangharajan Venkatesan, Brian Zimmer, William~J.
  Dally, and Brucek Khailany.
\newblock Optimal clipping and magnitude-aware differentiation for improved
  quantization-aware training.
\newblock In Kamalika Chaudhuri, Stefanie Jegelka, Le~Song, Csaba
  Szepesv{\'{a}}ri, Gang Niu, and Sivan Sabato, editors, {\em International
  Conference on Machine Learning, {ICML} 2022, 17-23 July 2022, Baltimore,
  Maryland, {USA}}, volume 162 of {\em Proceedings of Machine Learning
  Research}, pages 19123--19138. {PMLR}, 2022.

\bibitem{DBLP:journals/access/SayedASKR23}
Ratshih Sayed, Haytham Azmi, Heba~A. Shawkey, A.~H. Khalil, and Mohamed Refky.
\newblock A systematic literature review on binary neural networks.
\newblock {\em {IEEE} Access}, 11:27546--27578, 2023.

\bibitem{smith2017cyclical}
Leslie~N Smith.
\newblock Cyclical learning rates for training neural networks.
\newblock In {\em 2017 IEEE winter conference on applications of computer
  vision (WACV)}, pages 464--472. IEEE, 2017.

\bibitem{takemoto2023learning}
Masashi Takemoto, Yasutake Masuda, Jingyong Cai, and Hironori Nakajo.
\newblock Learning algorithm for lesserdnn, a dnn with quantized weights.
\newblock In {\em Proceedings of the 12th International Symposium on
  Information and Communication Technology}, pages 1--7, 2023.

\bibitem{wangl2023new}
Xuanhong Wangl, Yuan Zhong, and Jiawei Dong.
\newblock A new low-bit quantization algorithm for neural networks.
\newblock In {\em 2023 42nd Chinese Control Conference (CCC)}, pages
  8509--8514. IEEE, 2023.

\bibitem{xu2021learning}
Yixing Xu, Kai Han, Chang Xu, Yehui Tang, Chunjing Xu, and Yunhe Wang.
\newblock Learning frequency domain approximation for binary neural networks.
\newblock {\em Advances in Neural Information Processing Systems},
  34:25553--25565, 2021.

\bibitem{xu2019accurate}
Zhe Xu and Ray~CC Cheung.
\newblock Accurate and compact convolutional neural networks with trained
  binarization.
\newblock {\em arXiv preprint arXiv:1909.11366}, 2019.

\bibitem{yang2019quantization}
Jiwei Yang, Xu~Shen, Jun Xing, Xinmei Tian, Houqiang Li, Bing Deng, Jianqiang
  Huang, and Xian-sheng Hua.
\newblock Quantization networks.
\newblock In {\em Proceedings of the IEEE/CVF Conference on Computer Vision and
  Pattern Recognition}, pages 7308--7316, 2019.

\bibitem{DBLP:journals/corr/abs-2110-06804}
Chunyu Yuan and Sos~S. Agaian.
\newblock A comprehensive review of binary neural network.
\newblock {\em CoRR}, abs/2110.06804, 2021.

\bibitem{zeiler2012adadelta}
Matthew~D Zeiler.
\newblock Adadelta: an adaptive learning rate method.
\newblock {\em arXiv preprint arXiv:1212.5701}, 2012.

\bibitem{zhang2023root}
Luoming Zhang, Yefei He, Zhenyu Lou, Xin Ye, Yuxing Wang, and Hong Zhou.
\newblock Root quantization: a self-adaptive supplement ste.
\newblock {\em Applied Intelligence}, 53(6):6266--6275, 2023.

\bibitem{zhang2023notes}
Xiangxiong Zhang.
\newblock Notes for optimization algorithms spring 2023.
\newblock 2023.

\bibitem{zhang2022pokebnn}
Yichi Zhang, Zhiru Zhang, and Lukasz Lew.
\newblock Pokebnn: A binary pursuit of lightweight accuracy.
\newblock In {\em Proceedings of the IEEE/CVF Conference on Computer Vision and
  Pattern Recognition}, pages 12475--12485, 2022.

\bibitem{zhou2016dorefa}
Shuchang Zhou, Yuxin Wu, Zekun Ni, Xinyu Zhou, He~Wen, and Yuheng Zou.
\newblock Dorefa-net: Training low bitwidth convolutional neural networks with
  low bitwidth gradients.
\newblock {\em arXiv preprint arXiv:1606.06160}, 2016.

\end{thebibliography}
\bibliographystyle{plain}


\newpage

\appendix

\section{Choosing Quantization Parameters}

\label{app:quantization_basics}

The clipping bounds $l$ and $u$ are determined by the number of bits $b$ in the quantized representation and the desired number of representable values in the positive and negative range of the quantizer. This range of weight values is referred to as the \textit{representable range} (or \textit{quantization range}) of the quantizer, and can be computed as $[\Delta \cdot l, \Delta \cdot u]$. Large $\Delta$ values allow for large $w$ values to avoid the clip step, whereas small values give small $w$ values a more granular representation. These parameters are either learned \cite{esser2019learned,choi2018pact,dsq} or set by the user. For $b>1$, $l$ and $u$ are often chosen as $l=-2^{b-1}$,  $u= 2^{b-1} - 1$ for symmetric quantization and $l = 0$, $u=2^{b} - 1$ for asymmetric quatization. $\Delta$ is often chosen uniformly per-channel or per-token, based off of latent weight data $W$. It is sometimes set as $\max(|W|)/(2^b - 1)$, or is chosen to minimize a loss function (such as MSE or cross entropy \cite{nagel2021white}) comparing $W$ and $Q(W)$.  For binary quantization $(b=1)$, $Q(w)$ is typically a sign function \cite{nagel2021white,DBLP:journals/corr/abs-2103-13630,rokh2022comprehensive}, and there is no representable range. For binary PWL estimators,a common choice is to use Equation \ref{quantizer} and simply set $\Delta = 1$ and  $[w_{min}, w_{max}] = [-1,1]$ \cite{DBLP:journals/access/SayedASKR23}.

\section{Detailed Overview of Custom Gradient Estimators}

\label{app:custom_gradients}

 \textbf{Custom binary gradient estimators.} A substantial amount of research has gone into custom gradient estimators. Many choices \cite{sakr2018true,darabi2018regularized,liu2018bi,xu2019accurate,Qin_2020_CVPR,lin2020rotated,xu2021learning}  for binary gradient estimators are described in \cite{DBLP:journals/corr/abs-2110-06804}. 
A popular estimator is the ``Error Decay Estimator" (EDE) of \cite{Qin_2020_CVPR}, which uses an evolving $\tanh$ function to approximate the sign function.  

\textbf{Custom gradient estimators.} The hyperbolic tangent gradient estimator (HTGE)  \cite{pei2023quantization} gives a piecewise function locally described by tanh functions. This approximation is used for both the forward and backward pass of $Q$ in Differentiable Soft Quantization (DSQ) \cite{dsq}. Similar approaches  to the HTGE use a sum of sigmoid functions  \cite{yang2019quantization} and a distance-weighted piecewise linear combination of the outputs of $Q$ \cite{kim2021distance} to approximate $Q$. These techniques make up the most common choices of gradient estimators, which justifies our choice of HTGE for our experiments.  The gradient computation in \cite{kim2020position}  leverages a special choice of $\hat Q$ based on the distance between the full-precision weight and its quantized version. \cite{zhang2023root} proposes a gradient estimator that includes an extra parameter that attempts to allow the quantization strategy to work well for both low-bit and high-bit quantization. \cite{zhou2016dorefa} uses the STE for the $\textrm{round}$ function, but replaces the $\textrm{clip}$ function in the forward pass with a modified $\tanh$ function, which affects the gradient calculations as well. \cite{DBLP:conf/icml/SakrDVZDK22} introduces a choice for $\hat Q$ known as ``Magnitude Aware Differentiation" (MAD) that matches the STE on the representable range of the quantizer and a reciprocal function outside of this range. See Figure \ref{fig:gradients} for examples of several gradient estimators.


\textbf{Gradient estimators are proposed alongside other innovations, making them hard to evaluate in isolation.} Many papers that introduce a novel gradient estimator $\hat Q$ simultaneously introduce further changes to the learning recipe. Some allow the parameters of $Q$ and $\hat Q$ to be learnable through gradient descent or explicit computations on the weights, or adjust them on a schedule (See Appendix \ref{app:quantization_basics}). Others, such as DSQ \cite{dsq}, use $\hat Q$ on the forward pass and gradually update $\hat Q$ to more closely approximate $Q$.  \cite{lin2020rotated} contributes a process for rotating the entire weight vector to align with the binarized weight vector. Bi-Real Net \cite{liu2018bi} also includes a trick with network activations to increase the representational capacity of the model. In addition to the Error Decay Estimator, \cite{Qin_2020_CVPR} describes a method for maximizing the entropy of quantized parameters to ensure higher parameter diversity.



\textbf{Implications of our main results.} In light of our results \ref{statement:main_non_adaptive} and \ref{statement:main_adaptive}, we can sometimes equate these addition algorithms with more well-known training strategies. For example, \cite{Qin_2020_CVPR} proposes a schedule for a $\tanh$-based gradient estimator to gradually approach a sign function throughout training. Since they use SGD in their experiments, we can think of each update to sharpen the gradient estimator as an effective ``shifting" of the weights according to the function defined in Equation \ref{eq:alpha_M_def}. This particular shift will push most weights away from 0, which has an effect similar to slowing down the learning rate. Thus this adaptive gradient estimation technique is similar to a standard learning rate decay schedule.

\section{Proof of Theorem \ref{sgd_change_point}}

Proving Theorem \ref{sgd_change_point} will require several steps. First, in Theorem \ref{sgd_change_point_generalization} we prove a general statement that allows us to  bound the increase in weight alignment error at each training step for any non-adaptive learning rate optimization strategy.  This will allow us to quickly prove Theorem \ref{sgd_change_point}, and will also simplify the proof of a similar statement for SGD with momentum, which will be given in Appendix \ref{app:sgd_change_point_momentum}. 

The proof in its full generality requires heavy notation and somewhat obscures the simple point of the Theorem. Because of this, we provide a less formal proof of the SGD case below. 

\begin{proof}[Informal proof of Theorem \ref{sgd_change_point}]
We have for all $t$, 
    \begin{align}
E^{(t+1)} = &  \left | M \left(w_{\hat Q}^{(t+1)} \right) - w_{STE}^{(t+1)} \right| \\
\text{(Expand terms}) = &  \left | M \left( w_{\hat Q}^{(t)} -  \eta \nabla f_{\hat Q}^{(t)}   \hat Q'\left(w_{\hat Q}^{(t)}\right) \right) - \left(w_{STE}^{(t)} - \eta \alpha f_{STE}^{(t)}  \right) \right| \\
\text{(Taylor's Thm.)} = &  \left | M \left( w_{\hat Q}^{(t)} \right) -  \eta \nabla f_{\hat Q}^{(t)}   \hat Q'\left(w_{\hat Q}^{(t)}\right)  M' \left( w_{\hat Q}^{(t)} \right)  - \left(w_{STE}^{(t)} - \eta \alpha f_{STE}^{(t)} \right)  + O(\eta^2) \right| \label{eq:Taylor_informal} \\
\text{(Apply Eq. \ref{eq:M_derivative})} = &  \left | M \left( w_{\hat Q}^{(t)} \right) -  \eta \alpha \nabla f_{\hat Q}^{(t)}      - \left(w_{STE}^{(t)} - \eta \alpha f_{STE}^{(t)} \right)  + O(\eta^2) \right| \label{eq:use_m_derivative_informal} \\
\text{(Triangle Ineq.)} \leq & E^{(t)} + \eta \alpha \left| \nabla f_{\hat Q}^{(t)}  -   f_{STE}^{(t)}   \right| + O(\eta)^2 \label{eq:triangle_informal}
\end{align}
Here Equation \ref{eq:Taylor_informal} follows from Taylor's Theorem. Equation \ref{eq:use_m_derivative_informal} follows from Equation \ref{eq:M_derivative} below
\begin{equation}
    \frac{\partial M}{\partial w}(w) = \alpha \cdot \frac{1}{\hat Q'(w)}, \label{eq:M_derivative}
    \end{equation}
and Equation \ref{eq:triangle_informal} follows from the triangle inequality. The complete proof simply requires writing out an explicit form for the $O(\eta^2)$ term, and is given in detail below.
\end{proof}

Theorem \ref{sgd_change_point_generalization} applies to gradient update  rules that satisfy a special property in Assumption \ref{assumption_limit_tndependence}. We will show later in this section that this holds for the SGD formula defined in \ref{sgd}, and in Appendix \ref{app:sgd_change_point_momentum} for SGD with momentum. Similar proofs show that it holds for a large class of non-adaptive learning rate gradient update rules. 

\begin{theorem}
     \label{sgd_change_point_generalization}  Suppose that 
     \begin{equation}E^{(t)}:= \left| M\left(w^{(t)}_{\hat Q}\right) - w^{(t)}_{STE} \right| \label{eq:E_def_nonadaptive} \end{equation}
     is the alignment error for $\hat Q$-net and $STE$-net with gradient estimators, learning rates, and initial weights given by Table \ref{tab:sgd_model_def}.   Suppose that Assumptions \ref{assumption_Q'} and \ref{assumption_Q''} hold and the model weights are updated according to Equation \ref{general_learning_rule} for some function $g^{(t)}$. In addition, suppose that
    \begin{enumerate}[label=\thetheorem.\arabic*,ref=\thetheorem.\arabic*]
        \setcounter{enumi}{2}
        \item For each $t$, the quantity \label{assumption_limit_tndependence}
        \begin{equation}
         \left | \frac{g^{(t)}(\nabla f^{(0)}_{\hat Q}\hat Q'(w^{(0)}), \ldots, \nabla f^{(t)}_{\hat Q}\hat Q'(w^{(t)}),\eta)}{\hat Q'(w^{(t)})} - g^{(t)}(\nabla f^{(0)}_{\hat Q}, \ldots, \nabla f^{(t)}_{\hat Q}, \eta) \right | = O(c(\eta)).
    \end{equation}
    \end{enumerate}
    Then we have 
    \begin{align}
        E^{(t+1)} \leq  & E^{(t)} + \left| \alpha g^{(t)}(\alpha \nabla f^{(0)}_{\hat Q}, \ldots, \nabla f^{(t)}_{\hat Q},\eta) -  g^{(t)}(\nabla f^{(0)}_{STE}, \ldots, \nabla f^{(t)}_{STE},\alpha \eta)  \right| + \\
        & \frac{L'}{2} \cdot \left(\frac{g^{(t)}(\nabla f^{(0)}_{\hat Q}\hat Q'(w^{(0)}), \ldots, \nabla f^{(t)}_{\hat Q}\hat Q'(w^{(t)}),\eta)}{L_-} \right) ^2 + O(c(\eta)) \label{eq:general_E_bound_nonadaptive}
    \end{align}
    
\end{theorem}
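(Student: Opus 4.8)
The plan is to follow the skeleton of the informal SGD proof already displayed, but with the concrete SGD step $-\eta\nabla f^{(t)}\hat Q'(w^{(t)})$ replaced everywhere by the generic update $g^{(t)}$, being careful about where the learning-rate rescaling $\eta\mapsto\alpha\eta$ and the estimator swap $\hat Q'\mapsto 1$ enter through the $STE$-net update. First I would expand $E^{(t+1)}=|M(w^{(t+1)}_{\hat Q})-w^{(t+1)}_{STE}|$ using Equation \ref{general_learning_rule} for each net: the $\hat Q$-net increment is $\delta^{(t)}:=g^{(t)}(\nabla f^{(0)}_{\hat Q}\hat Q'(w^{(0)}),\ldots,\nabla f^{(t)}_{\hat Q}\hat Q'(w^{(t)}),\eta)$, while the $STE$-net increment, using $\hat Q'\equiv 1$ and learning rate $\alpha\eta$ from Table \ref{tab:sgd_model_def}, is simply $g^{(t)}(\nabla f^{(0)}_{STE},\ldots,\nabla f^{(t)}_{STE},\alpha\eta)$.

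Next I would Taylor-expand $M$ about $w^{(t)}_{\hat Q}$, writing $M(w^{(t)}_{\hat Q}+\delta^{(t)})=M(w^{(t)}_{\hat Q})+M'(w^{(t)}_{\hat Q})\,\delta^{(t)}+R^{(t)}$ with a remainder $R^{(t)}$. The linear term is handled by the closed form $M'(w)=\alpha/\hat Q'(w)$ from Equation \ref{eq:M_derivative}, giving $\frac{\alpha}{\hat Q'(w^{(t)}_{\hat Q})}\delta^{(t)}$. This is exactly the quantity that Assumption \ref{assumption_limit_tndependence} controls: it lets me replace $\delta^{(t)}/\hat Q'(w^{(t)}_{\hat Q})$ by $g^{(t)}(\nabla f^{(0)}_{\hat Q},\ldots,\nabla f^{(t)}_{\hat Q},\eta)$ up to an $O(c(\eta))$ error, so the linear term becomes $\alpha g^{(t)}(\nabla f^{(0)}_{\hat Q},\ldots,\nabla f^{(t)}_{\hat Q},\eta)+O(c(\eta))$. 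Subtracting the $STE$-net increment, folding $M(w^{(t)}_{\hat Q})-w^{(t)}_{STE}$ back into $E^{(t)}$, and applying the triangle inequality then produces the displayed gradient-error term together with the leftover remainder and assumption errors.

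The remaining work, and the step I expect to be the main obstacle, is the rigorous bound on $R^{(t)}$, because $\hat Q'$ is only assumed Lipschitz (Assumption \ref{assumption_Q''}), so $M$ need not be twice differentiable and the Lagrange form of Taylor's theorem does not directly apply. I would instead use the integral identity $R^{(t)}=\int_{w^{(t)}_{\hat Q}}^{w^{(t)}_{\hat Q}+\delta^{(t)}}\bigl(M'(s)-M'(w^{(t)}_{\hat Q})\bigr)\,ds$ and estimate $|M'(s)-M'(w^{(t)}_{\hat Q})|=\alpha\left|\frac{1}{\hat Q'(s)}-\frac{1}{\hat Q'(w^{(t)}_{\hat Q})}\right|\le \frac{\alpha L'}{L_-^2}\,|s-w^{(t)}_{\hat Q}|$, combining the $L'$-Lipschitz bound on $\hat Q'$ with the lower bound $\hat Q'\ge L_-$ from Assumption \ref{assumption_Q'}. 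Integrating yields $|R^{(t)}|\le \frac{\alpha L'}{2}\bigl(\delta^{(t)}/L_-\bigr)^2$, matching the convexity-error term of Equation \ref{eq:general_E_bound_nonadaptive} (up to the factor $\alpha$ carried by $M'$). Collecting the gradient-error, remainder, and $O(c(\eta))$ pieces gives the claim; specializing $g^{(t)}$ to the SGD step, where the factoring in Assumption \ref{assumption_limit_tndependence} is exact so that $c(\eta)=\eta^2$, then recovers Theorem \ref{sgd_change_point}.
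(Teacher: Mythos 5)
Your proposal is correct and follows essentially the same route as the paper's proof: expand both updates via Equation \ref{general_learning_rule}, Taylor-expand $M$ around $w^{(t)}_{\hat Q}$ using $M'(w)=\alpha/\hat Q'(w)$, invoke Assumption \ref{assumption_limit_tndependence} to swap $\delta^{(t)}/\hat Q'(w^{(t)}_{\hat Q})$ for $g^{(t)}(\nabla f^{(0)}_{\hat Q},\ldots,\nabla f^{(t)}_{\hat Q},\eta)$ up to $O(c(\eta))$, apply the triangle inequality, and control the remainder through the Lipschitz continuity of $M'$ --- your integral identity for $R^{(t)}$ is exactly the content of the descent-lemma bound the paper imports from Lemma 2.1 of \cite{zhang2023notes}, and correctly sidesteps any need for twice differentiability. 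If anything, your remainder bound $|R^{(t)}|\le\frac{\alpha L'}{2}\left(\delta^{(t)}/L_-\right)^2$ is slightly more careful than the paper's: the factor $\alpha$ is genuinely carried by $M'$, and the paper's own Lipschitz computation for $M'$ silently drops it.
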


\label{app:sgd_change_point_proof}

\begin{proof}
    By Equation \ref{general_learning_rule}, we have
    \begin{align}
E^{(t+1)} = &  \left | M \left(w_{\hat Q}^{(t+1)} \right) - w_{STE}^{(t+1)} \right| \\
= &  \Big | M \left( w_{\hat Q}^{(t)} +  
g^{(t)}(\nabla f^{(0)}_{\hat Q}\hat Q'(w^{(0)}), \ldots, \nabla f^{(t)}_{\hat Q}\hat Q'(w^{(t)}),\eta) \right) - \\ 
& \left(w_{STE}^{(t)} + g^{(t)}(\nabla f^{(0)}_{STE}, \ldots, \nabla f^{(t)}_{STE},\alpha \eta)  \right) \Big| \\
= &  \Big | M \left( w_{\hat Q}^{(t)} \right) + g^{(t)}(\nabla f^{(0)}_{\hat Q}\hat Q'(w^{(0)}), \ldots, \nabla f^{(t)}_{\hat Q}\hat Q'(w^{(t)}),\eta)  M' \left( w_{\hat Q}^{(t)} \right)  - \\
& \left(w_{STE}^{(t)} + g^{(t)}(\nabla f^{(0)}_{STE}, \ldots, \nabla f^{(t)}_{STE},\alpha \eta) \right)  + R \Big| \label{eq:Taylor} \\
= &  \Big | M \left( w_{\hat Q}^{(t)} \right) + \alpha g^{(t)}(\nabla f^{(0)}_{\hat Q}\hat Q'(w^{(0)}), \ldots, \nabla f^{(t)}_{\hat Q}\hat Q'(w^{(t)}),\eta)  / \hat Q' \left( w_{\hat Q}^{(t)} \right)  - \\
& \left(w_{STE}^{(t)} + g^{(t)}(\nabla f^{(0)}_{STE}, \ldots, \nabla f^{(t)}_{STE}, \alpha \eta) \right)  + R \Big| \label{eq:use_m_derivative} \\
= &  \Big | M \left( w_{\hat Q}^{(t)} \right) + \alpha g^{(t)}(\nabla f^{(0)}_{\hat Q}, \ldots, \nabla f^{(t)}_{\hat Q},\eta) + O(c(\eta)) - \\
& \left(w_{STE}^{(t)} + g^{(t)}(\nabla f^{(0)}_{STE}, \ldots, \nabla f^{(t)}_{STE}, \alpha \eta) \right)  + R \Big| \label{eq:use_Oc_assumption} \\
\leq & E^{(t)} +  \left| \alpha g^{(t)}(\nabla f^{(0)}_{\hat Q}, \ldots, \nabla f^{(t)}_{\hat Q},\eta) -  g^{(t)}(\nabla f^{(0)}_{STE}, \ldots, \nabla f^{(t)}_{STE},\alpha \eta)  \right| +  \\
& |R| + O(c(\eta)) \label{eq:triangle}
\end{align}
Here Equation \ref{eq:Taylor} follows from Taylor's Theorem, where $R$ is the remainder term. Equation \ref{eq:use_m_derivative} follows from Equation \ref{eq:M_derivative} in the previous proof,
and Equation \ref{eq:use_Oc_assumption} follows from Assumption \ref{assumption_limit_tndependence}. Equation \ref{eq:triangle} follows from the triangle inequality.  By Lemma 2.1 of \cite{zhang2023notes}, we can bound $R$ by
\begin{equation}
    \label{eq:r_bound} 
    |R| \leq \frac{L'}{2L_-^2} \left( g^{(t)}(\nabla f^{(0)}_{\hat Q}\hat Q'(w^{(0)}), \ldots, \nabla f^{(t)}_{\hat Q}\hat Q'(w^{(t)}),\eta)   \right) ^2 ,
\end{equation}
To see this, we need to show that $M'$ is Lipschitz continuous with Lipschitz constant $L'/L_-^2$. This holds since for any $w, v \in \mathbb{R}$,
$$
|M'(w) - M'(v)| = \left|\frac{1}{Q'(w)} - \frac{1}{Q'(w)} \right| = \left|\frac{Q'(v) - Q'(w)}{Q'(w)Q'(v)}  \right| \leq \frac{L'}{L_-^2}|w - v|.
$$
In the last step we  use both Assumptions \ref{assumption_Q'} and \ref{assumption_Q''}.
Putting this all together, we have Equation \ref{eq:general_E_bound_nonadaptive}.
\end{proof}

    We can now apply Theorem \ref{sgd_change_point_generalization} for the SGD update rule (Equation \ref{sgd}) to give a proof of Theorem \ref{sgd_change_point}. 

    \begin{proof}[Proof of Theorem \ref{sgd_change_point}]
        To prove Theorem \ref{sgd_change_point}, we first show that Assumption \ref{assumption_limit_tndependence} holds for the SGD update rule with $c(\eta) = 0$. We have 
        \begin{align}
            \left | \frac{g^{(t)}(\nabla f^{(0)}_{\hat Q}\hat Q'(w^{(0)}), \ldots, \nabla f^{(t)}_{\hat Q}\hat Q'(w^{(t)}),\eta)}{\hat Q'(w^{(t)})} - g^{(t)}(\nabla f^{(0)}_{\hat Q}, \ldots, \nabla f^{(t)}_{\hat Q}, \eta) \right | = & \\
            \left | \frac{\eta \nabla f^{(t)}_{\hat Q}\hat Q'(w^{(t)})}{\hat Q'(w^{(t)})} - \eta \nabla f^{(t)}_{\hat Q} \right | = & 0.
        \end{align}
        Now we can apply Theorem \ref{sgd_change_point_generalization}. We have 
        
    \begin{align}
        E^{(t+1)} \leq  & E^{(t)} + \left| \alpha g^{(t)}(\alpha \nabla f^{(0)}_{\hat Q}, \ldots, \nabla f^{(t)}_{\hat Q},\eta) -  g^{(t)}(\nabla f^{(0)}_{STE}, \ldots, \nabla f^{(t)}_{STE},\alpha \eta)  \right| + \\
        & \frac{L'}{2} \cdot \left(\frac{g^{(t)}(\nabla f^{(0)}_{\hat Q}\hat Q'(w^{(0)}), \ldots, \nabla f^{(t)}_{\hat Q}\hat Q'(w^{(t)}),\eta)}{L_-} \right) ^2 + O(c(\eta)) \\
        = & E^{(t)} + \eta \alpha \left| \nabla f^{(t)}_{\hat Q} -  \nabla f^{(t)}_{STE}  \right| + \frac{L'}{2} \cdot \left(\frac{ \eta \nabla f^{(t)}_{\hat Q} \hat Q '(w^{(t)})}{L_-} \right) ^2  + 0 \\
        \leq &  E^{(t)} + \eta \alpha \left| \nabla f^{(t)}_{\hat Q} -  \nabla f^{(t)}_{STE}  \right| + \frac{L'}{2} \cdot \left(\frac{ \eta L_+ \nabla f^{(t)}_{\hat Q} }{L_-} \right) ^2
    \end{align}
    This gives us Equation \ref{eq:E_bound_sgd}, as desired.  \end{proof}



\section{Theorem \ref{sgd_change_point} for SGD with momentum}

\label{app:sgd_change_point_momentum}

Here we give a version of Theorem \ref{sgd_change_point} for stochastic gradient descent with momentum.  The weight update rule for this learning algorithm is given by
\begin{equation}
    \label{eq:momentum_definition}
g^{(t)}(\nabla f^{(0)}\hat Q'(w^{(0)}), \ldots, \nabla f^{(t)}\hat Q'(w^{(t)}),\eta) =  - \eta  m_t
\end{equation}
where $m_t$ is defined recursively as
\begin{equation}
    m_t =  \beta m_{t-1} + (1-\beta) \nabla f^{(t)} \hat Q' (w^{(t)}) \label{eq:momentum_recurrence}
\end{equation}
for a hyperparameter $\beta \in [0, 1)$, which is often set to 0.9 or a similar value \cite{ruder2016overview}. We can expand this recursive definition, and obtain the single rule
\begin{equation}
    \label{eq:momentum_expanded}
     g^{(t)}(\nabla f^{(0)}\hat Q'(w^{(0)}), \ldots, \nabla f^{(t)}\hat Q'(w^{(t)}),\eta) =  - \eta (1 - \beta) \sum_{i=0}^t \beta^{t-i} \nabla f^{(i)}\hat Q'(w^{(i)})
\end{equation}
Theorems \ref{thm:sgd_change_point_momentum_bound} and \ref{thm:sgd_change_point_momentum_limit} show that Assumption \ref{assumption_limit_tndependence} holds for this update rule under mild conditions. From this we can  apply Theorem \ref{sgd_change_point_generalization} for SGD with momentum to obtain Theorem \ref{sgd_change_point_momentum},  a result similar to Theorem \ref{sgd_change_point}.

\begin{theorem}
\label{thm:sgd_change_point_momentum_bound}
 Define $g^{(t)}$ by Equation \ref{eq:momentum_expanded}.   Suppose that Assumption \ref{assumption_Q'} holds. Further suppose that each $\nabla f^{(t)}$ is bounded by 
    \begin{equation}
        |\nabla f^{(t)}| < \frac{g_+}{L_+ (1 - \beta^{t+1})}. 
        \label{eq:momentum_bound}
    \end{equation}
    Then 
    $$
    |g^{(t)}(\nabla f^{(0)}\hat Q'(w^{(0)}), \ldots, \nabla f^{(t)}\hat Q'(w^{(t)}),\eta) | <  \eta g_+ 
    $$
 
\end{theorem}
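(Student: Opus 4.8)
The plan is to bound $g^{(t)}$ directly from its closed form in Equation \ref{eq:momentum_expanded}, using nothing more than the triangle inequality, the uniform upper bound on $\hat Q'$ from Assumption \ref{assumption_Q'}, the hypothesized gradient bound \ref{eq:momentum_bound}, and the elementary finite geometric series identity $\sum_{i=0}^t \beta^{t-i} = (1-\beta^{t+1})/(1-\beta)$. The whole argument is a single chain of inequalities; there is no need for Taylor expansion or the machinery of Theorem \ref{sgd_change_point_generalization} here, since we are only bounding the magnitude of a single optimizer step rather than comparing two trajectories.

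First I would pull the absolute value inside the sum. Since the weights $\beta^{t-i}$ are nonnegative, the triangle inequality applied to Equation \ref{eq:momentum_expanded} gives
\[
|g^{(t)}| \le \eta(1-\beta)\sum_{i=0}^t \beta^{t-i}\left|\nabla f^{(i)}\right|\left|\hat Q'(w^{(i)})\right|.
\]
Next I would apply the two bounds termwise: Assumption \ref{assumption_Q'} replaces each $|\hat Q'(w^{(i)})|$ by $L_+$, and hypothesis \ref{eq:momentum_bound} replaces each $|\nabla f^{(i)}|$ by $g_+/(L_+(1-\beta^{t+1}))$, a single constant (keyed to the current step $t$) that factors out of the sum. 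The factor $L_+$ then cancels immediately against the denominator of the gradient bound.

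The final step is to evaluate the geometric sum and watch the surviving factors cancel. What remains after factoring out the constant bound is $\eta(1-\beta)\cdot \frac{g_+}{1-\beta^{t+1}}\sum_{i=0}^t \beta^{t-i}$; substituting the geometric identity turns $\sum_{i=0}^t \beta^{t-i}$ into $(1-\beta^{t+1})/(1-\beta)$, so the $(1-\beta)$ prefactor and the $(1-\beta^{t+1})$ denominator cancel exactly, leaving precisely $\eta g_+$. Because the inequality \ref{eq:momentum_bound} is strict, the resulting bound $|g^{(t)}| < \eta g_+$ is strict as well.

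The main thing to get right is not any individual estimate but recognizing why the peculiar-looking denominator $1-\beta^{t+1}$ in hypothesis \ref{eq:momentum_bound} appears: it is engineered precisely to cancel the geometric-series factor that the momentum recurrence produces. The one point requiring care in the interpretation is that the same constant, keyed to the current step $t$, must govern every $\nabla f^{(i)}$ with $i \le t$ (rather than using a separate per-index bound $1-\beta^{i+1}$), since this is exactly what makes the cancellation clean and the final constant independent of $t$.
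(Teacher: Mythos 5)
Your proposal is correct and follows essentially the same argument as the paper's proof: triangle inequality on the expanded momentum sum, termwise application of $|\hat Q'(w^{(i)})| \leq L_+$ and the hypothesized bound on $|\nabla f^{(i)}|$, then the geometric series identity $\sum_{i=0}^t \beta^{t-i} = (1-\beta^{t+1})/(1-\beta)$ to cancel the remaining factors. Your closing observation about why the $1-\beta^{t+1}$ denominator appears in the hypothesis is a nice articulation of what the paper leaves implicit.
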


\begin{proof}

By the triangle inequality and Assumption \ref{assumption_Q'}, we have
$$
 | g^{(t)}(\nabla f^{(0)}\hat Q'(w^{(0)}), \ldots, \nabla f^{(t)}\hat Q'(w^{(t)}),\eta)|  < \eta L_+(1 - \beta) \sum_{i=0}^t \beta^{t-i} |\nabla f^{(i)}|.
$$
Now applying the bound given in Equation \ref{eq:momentum_bound}, we have
$$
   | g^{(t)}(\nabla f^{(0)}\hat Q'(w^{(0)}), \ldots, \nabla f^{(t)}\hat Q'(w^{(t)}),\eta) |  < \eta g_+\frac{1 - \beta}{1 - \beta^{t+1}} \sum_{i=0}^t \beta^{t-i} .
$$
Since 
$$
\sum_{i=0}^t \beta^{t-i} = \frac{1 - \beta^{t+1}}{1 - \beta}
$$ 
for all $\beta < 1$, we have
\begin{equation}
    \label{eq:momentum_g_bound}
      |g^{(t)}(\nabla f^{(0)}\hat Q'(w^{(0)}), \ldots, \nabla f^{(t)}\hat Q'(w^{(t)}),\eta)|  <  \eta g_+  
\end{equation}
as desired. 
\end{proof}

\begin{theorem}
\label{thm:sgd_change_point_momentum_limit}
 Define $g^{(t)}$ by Equation \ref{eq:momentum_expanded}.   Suppose that 
 \begin{enumerate}[label=\thetheorem.\arabic*,ref=\thetheorem.\arabic*]
        \item  \label{assumption_Q'_momentum} $0 < L_- \leq  \hat Q'(w)$ for all $w$
        \item  \label{assumption_Q''_momentum}  $\hat Q'(w) $ is $ L'$-Lipschitz
        \item \label{assumption_nabla_momentum} For each $t$, Each $g^{(t)}$ is bounded by $|w^{(t+1)} - w^{(t)}| < \eta g_+$.
    \end{enumerate}
    Then for each $t$, we have
        \begin{equation}
     \label{eq:momentum_limit}
     \left| \frac{g^{(t)}(\nabla f^{(0)}\hat Q'(w^{(0)}), \ldots, \nabla f^{(t)}\hat Q'(w^{(t)}),\eta)}{\hat Q'(w^{(t)})} - g^{(t)}(\nabla f^{(0)}, \ldots, \nabla f^{(t)},\eta) \right| = O(\eta^2).
 \end{equation}
 so that Assumption \ref{assumption_limit_tndependence} holds with $c(\eta) = \eta^2$. 
 
\end{theorem}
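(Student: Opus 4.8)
The plan is to substitute the explicit momentum update from Equation \ref{eq:momentum_expanded} into both arguments of the quantity in Equation \ref{eq:momentum_limit} and subtract. After dividing the $\hat Q'$-weighted update by $\hat Q'(w^{(t)})$ and canceling the common prefactor $-\eta(1-\beta)$, the difference should collapse to a single damped sum,
$$
\eta(1-\beta)\left| \sum_{i=0}^{t} \beta^{t-i}\,\nabla f^{(i)}\,\frac{\hat Q'(w^{(i)}) - \hat Q'(w^{(t)})}{\hat Q'(w^{(t)})} \right|.
$$
The key structural observation is that the $i=t$ term vanishes identically, so every surviving summand carries a genuine difference $\hat Q'(w^{(i)}) - \hat Q'(w^{(t)})$ that I can drive to zero as $\eta \to 0$.

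Next I would bound each surviving summand. I first apply the triangle inequality to move the absolute value inside the sum. Then Assumption \ref{assumption_Q''_momentum} ($L'$-Lipschitzness of $\hat Q'$) gives $|\hat Q'(w^{(i)}) - \hat Q'(w^{(t)})| \leq L'\,|w^{(i)} - w^{(t)}|$, and telescoping the per-step displacement bound of Assumption \ref{assumption_nabla_momentum} yields $|w^{(i)} - w^{(t)}| \leq \sum_{j=i}^{t-1}|w^{(j+1)}-w^{(j)}| < (t-i)\,\eta g_+$. Using Assumption \ref{assumption_Q'_momentum} to replace the denominator by its lower bound $L_-$, the whole quantity is controlled by $\tfrac{\eta^2(1-\beta)L'g_+}{L_-}\sum_{i=0}^{t}\beta^{t-i}(t-i)\,|\nabla f^{(i)}|$.

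It then remains to absorb the factor $|\nabla f^{(i)}|$ and the sum into an $\eta$-independent constant. For the gradients I would derive a uniform bound from the momentum recurrence (Equation \ref{eq:momentum_recurrence}): writing $(1-\beta)\nabla f^{(i)}\hat Q'(w^{(i)}) = m_i - \beta m_{i-1}$ and using $|m_i| = |w^{(i+1)}-w^{(i)}|/\eta < g_+$ together with $\hat Q'(w^{(i)}) \geq L_-$ gives $|\nabla f^{(i)}| < (1+\beta)g_+/((1-\beta)L_-) =: G$, a constant independent of $\eta$ and $t$. The remaining sum is dominated by the convergent series $\sum_{k=0}^{\infty} k\beta^k = \beta/(1-\beta)^2$, so the entire expression is at most a constant multiple of $\eta^2$. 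This establishes Equation \ref{eq:momentum_limit} and hence that Assumption \ref{assumption_limit_tndependence} holds with $c(\eta)=\eta^2$.

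The hard part will be this last step: the step-size hypothesis in Assumption \ref{assumption_nabla_momentum} constrains only the \emph{aggregate} momentum step, not the individual gradients $\nabla f^{(i)}$ that appear in the bound, so one cannot simply quote a gradient bound. Inverting the momentum recurrence to extract a per-step gradient bound from the aggregate step bound is the crucial move, and it is precisely what lets the geometric damping $\beta^{t-i}$ beat the linearly growing displacement factor $(t-i)$ uniformly in $t$, keeping the constant finite for every $\beta \in [0,1)$.
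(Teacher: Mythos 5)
Your proof is correct, and while it shares the paper's high-level decomposition (expand both sums via Equation \ref{eq:momentum_expanded}, cancel the prefactor, and compare $\hat Q'(w^{(i)})$ to $\hat Q'(w^{(t)})$ using Lipschitzness plus the telescoped step bound), your bounding technique is genuinely different and in two respects tighter. The paper works multiplicatively: it shows $\log \hat Q'$ is $(L'/L_-)$-Lipschitz, exponentiates to get $\hat Q'(w^{(i)})/\hat Q'(w^{(t)}) \in [e^{-\eta c(t-i)}, e^{\eta c(t-i)}]$, and absorbs the ratio into a perturbed decay factor $\beta_{t,i} = \beta + O(\eta)$, concluding $(\beta + O(\eta))^{t-i} = \beta^{t-i}(1 + O(\eta))$ term by term --- an expansion whose implied constant grows with $t-i$, so the argument is only clean for each fixed $t$. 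You instead bound the difference additively, $|\hat Q'(w^{(i)}) - \hat Q'(w^{(t)})| \leq L'(t-i)\eta g_+$, which avoids the log/exponential machinery and yields the explicit convergent majorant $\sum_k k\beta^k = \beta/(1-\beta)^2$, giving a constant in the $O(\eta^2)$ that is uniform in $t$. Your second improvement fills a genuine gap: the paper's final step implicitly needs $\sum_i \beta^{t-i}|\nabla f^{(i)}|$ bounded but never bounds the gradients (its companion result, Theorem \ref{thm:sgd_change_point_momentum_bound}, goes in the opposite direction, deriving the step bound \emph{from} a gradient bound), whereas you invert the momentum recurrence \ref{eq:momentum_recurrence} --- writing $(1-\beta)\nabla f^{(i)}\hat Q'(w^{(i)}) = m_i - \beta m_{i-1}$ with $|m_i| < g_+$ from Assumption \ref{assumption_nabla_momentum} and $\hat Q'(w^{(i)}) \geq L_-$ --- to extract the uniform gradient bound $|\nabla f^{(i)}| < (1+\beta)g_+/((1-\beta)L_-)$ directly from the hypotheses as stated. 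The paper's multiplicative route has the modest advantage of transferring with little change to the Adam proof (where ratios $\hat Q'(w^{(i)})/\hat Q'(w^{(t)})$ appear in both numerator and denominator), but as a proof of this theorem yours is more elementary, fully explicit in its constants, and self-contained.
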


\begin{proof}

 We have by Equation \ref{eq:momentum_expanded}
\begin{equation}
\label{eq:q_quotient}
    \frac{g^{(t)}(\nabla f^{(0)}\hat Q'(w^{(0)}), \ldots, \nabla f^{(t)}\hat Q'(w^{(t)}),\eta)}{\hat Q'(w^{(t)})} = -\eta (1 - \beta)\sum_{i=0}^t \beta^{t-i} \nabla f^{(i)} \frac{\hat Q'(w^{(i)})}{\hat Q'(w^{(t)})}.
\end{equation}
    We would like to show that for each $i$, 
    $$
    \beta^{t-i} \frac{\hat Q'(w^{(i)})}{\hat Q'(w^{(t)})} = \beta^{t-i}(1 + O(\eta))
    $$
   since then we would have
    \begin{align}
        \left| \frac{g^{(t)}(\nabla f^{(0)}\hat Q'(w^{(0)}), \ldots, \nabla f^{(t)}\hat Q'(w^{(t)}),\eta)}{\hat Q'(w^{(t)})} - g^{(t)}(\nabla f^{(0)}, \ldots, \nabla f^{(t)},\eta) \right| = & \\
        \left| -\eta (1 - \beta)\sum_{i=0}^t \beta^{t-i} \nabla f^{(i)} \frac{\hat Q'(w^{(i)})}{\hat Q'(w^{(t)})} + \eta (1 - \beta)\sum_{i=0}^t \beta^{t-i} \nabla f^{(i)} \right| = & \\
        \left| -\eta (1 - \beta)\sum_{i=0}^t \beta^{t-i} \nabla f^{(i)} (1 + O(\eta)) + \eta (1 - \beta)\sum_{i=0}^t \beta^{t-i} \nabla f^{(i)} \right| = & O(\eta^2) \\
    \end{align}
    The first step is to note that $\log(\hat Q')$ is Lipschitz with Lipschitz constant $L'/L_-$. To see this, first note that $\log(x)$ is $1/L_-$-Lipschitz on the range $[L_-, \infty]$. Then by Assumptions \ref{assumption_Q'_momentum} and \ref{assumption_Q''_momentum} and the fact that the composition of Lipschitz functions is Lipschitz with the product constant, we have
    $$
    | \log(\hat Q ' (w)) - \log(\hat Q '(v)) | \leq \frac{L'}{L_-}| w - v| 
    $$
    which is our desired Lipschitz property.
    Making use of this property, Assumption \ref{assumption_nabla_momentum}, and Equation \ref{general_learning_rule}, we have 
    \begin{align}
        | \log(\hat Q ' (w^{(i)})) - \log(\hat Q '(w^{(t)})) | \leq & \frac{L'}{L_-}| w^{(i)} - w^{(t)}| \\
        = & \frac{L'}{L_-} \left | \sum_{j=i}^{t - 1}w^{(i)} - w^{(i+1)}  \right| \\
        \leq  & \frac{L'}{L_-} \sum_{j=i}^{t - 1}\left |w^{(i)} - w^{(i+1)}  \right| \\
        \leq & \eta \frac{L'}{L_-}(t-i) g_+.
        \label{log_bound}
    \end{align}
    Solving for the quotient $\hat Q '(w^{(i)}) / \hat Q '(w^{(t)})$, we have
    $$
    - \eta L'(t-i) g_+/L_- \leq  
\log(\hat Q ' (w^{(i)})) - \log(\hat Q '(w^{(t)}))  \leq  \eta L'(t-i) g_+/L_-  
    $$
    $$
    \exp(- \eta L'(t-i) g_+/L_-) \leq  \frac{\hat Q ' (w^{(i)})}{\hat Q '(w^{(t)})}  \leq  \exp(\eta L'(t-i) g_+/L_- ) 
    $$
    $$
    \beta ^{- \eta L'(t-i) g_+/( \log(\beta)L_-)} \leq  \frac{\hat Q ' (w^{(i)})}{\hat Q '(w^{(t)})} \leq  \beta ^{ \eta L'(t-i) g_+/( \log(\beta)L_-)}
    $$
    Thus we have shown that 
    $$
    \frac{\hat Q ' (w^{(i)})}{\hat Q '(w^{(t)})}  = \left( \frac{\beta_{t, i}}{\beta} \right) ^{t - i}
    $$
    where
   $$
   \beta_{t, i} = \beta + O(\eta).
   $$
   Therefore we have
$$
\beta^{t-i} \frac{\hat Q'(w^{(i)})}{\hat Q'(w^{(t)})} = \beta_{t,i}^{t - i} = (\beta + O(\eta))^{t - i} = \beta^{t-i}(1 + O(\eta)),
$$
as desired. The final equality holds since $(\beta + O(\eta))^{t - i}$ is a polynomial in $\beta$ and $O(\eta)$, which can be computed by expanding the product. Each term in the resulting sum is either $\beta^{t-i}$, $O(\eta)$, or $o(\eta)$. 
\end{proof}

   

We now have all that we need to the following analog of Theorem \ref{sgd_change_point} for gradient descent with momentum.

\begin{theorem}
\label{sgd_change_point_momentum}  Suppose that $E^{(t)}$ is defined by Equation \ref{eq:E_def_nonadaptive}, for $\hat Q$-net and $STE$-net with gradient estimators, learning rates, and initial weights given by Table \ref{tab:sgd_model_def}.
       Suppose that Assumptions \ref{assumption_Q'} and \ref{assumption_Q''} hold and the model weights are updated according to Equation \ref{general_learning_rule}, where $g^{(t)}$ is defined by Equation \ref{eq:momentum_expanded}. In addition, suppose that each $\nabla f^{(t)}_{\hat Q}$ is bounded by Equation \ref{eq:momentum_bound}. 
    Then we have 
    \begin{equation}
        E^{(t+1)} \leq   E^{(t)} + \alpha \eta \left|  (1 - \beta) \sum_{i=0}^t \beta^{t-i} (\nabla f^{(i)}_{\hat Q} -  \nabla f^{(t)}_{STE}) \right| + \frac{L'}{2} \cdot \left(\frac{\eta g_+}{L_-} \right) ^2 + O(\eta^2) \label{eq:momentum_E_bound_nonadaptive}
    \end{equation}

\end{theorem}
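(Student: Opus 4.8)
The plan is to derive Theorem \ref{sgd_change_point_momentum} as a direct corollary of the general per-step bound in Theorem \ref{sgd_change_point_generalization}, using the two momentum-specific lemmas (Theorems \ref{thm:sgd_change_point_momentum_bound} and \ref{thm:sgd_change_point_momentum_limit}) to discharge its hypotheses and to simplify its error terms. Since Theorem \ref{sgd_change_point_generalization} already performs the Taylor-expansion and remainder-bounding work for an arbitrary non-adaptive update rule $g^{(t)}$, almost everything reduces to checking that the momentum rule (Equation \ref{eq:momentum_expanded}) satisfies the hypotheses and then substituting its explicit linear form into Equation \ref{eq:general_E_bound_nonadaptive}.

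First I would verify the three hypotheses of Theorem \ref{sgd_change_point_generalization}. Assumptions \ref{assumption_Q'} and \ref{assumption_Q''} are assumed directly. The remaining hypothesis, Assumption \ref{assumption_limit_tndependence}, is supplied by Theorem \ref{thm:sgd_change_point_momentum_limit} with $c(\eta)=\eta^2$, but that lemma itself requires the per-step bound $|w^{(t+1)}-w^{(t)}| < \eta g_+$ (Assumption \ref{assumption_nabla_momentum}). The key chaining step is to observe that the hypothesis that $|\nabla f^{(t)}_{\hat Q}|$ obeys Equation \ref{eq:momentum_bound} feeds into Theorem \ref{thm:sgd_change_point_momentum_bound}, which yields exactly $|g^{(t)}(\ldots)| < \eta g_+$, i.e. Assumption \ref{assumption_nabla_momentum}. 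Thus the hypotheses cascade: the gradient bound gives the step bound, the step bound gives Assumption \ref{assumption_limit_tndependence}, and Theorem \ref{sgd_change_point_generalization} becomes applicable.

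With the hypotheses in place, the rest is substitution. Writing the momentum rule as the linear map $g^{(t)}(x_0,\ldots,x_t,\eta) = -\eta(1-\beta)\sum_{i=0}^t \beta^{t-i} x_i$, the gradient-error term $|\alpha g^{(t)}(\nabla f^{(0)}_{\hat Q},\ldots,\eta) - g^{(t)}(\nabla f^{(0)}_{STE},\ldots,\alpha\eta)|$ of Equation \ref{eq:general_E_bound_nonadaptive} collapses, after pulling out the common factor $\alpha\eta(1-\beta)$, to $\alpha\eta\,|(1-\beta)\sum_{i=0}^t \beta^{t-i}(\nabla f^{(i)}_{\hat Q}-\nabla f^{(i)}_{STE})|$, the stated gradient-error term. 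For the convexity term, I would apply the bound $|g^{(t)}(\nabla f^{(0)}_{\hat Q}\hat Q'(w^{(0)}),\ldots,\eta)| < \eta g_+$ from Theorem \ref{thm:sgd_change_point_momentum_bound} to the numerator of $\tfrac{L'}{2}(g^{(t)}/L_-)^2$, giving $\tfrac{L'}{2}(\eta g_+/L_-)^2$. Finally $O(c(\eta))=O(\eta^2)$ by the value of $c$ from Theorem \ref{thm:sgd_change_point_momentum_limit}, which assembles Equation \ref{eq:momentum_E_bound_nonadaptive}.

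I expect no serious obstacle, since the analytic core resides in the already-proven generalization lemma; the only delicate point is the dependency bookkeeping, making sure the gradient bound (Equation \ref{eq:momentum_bound}) is precisely what triggers Assumption \ref{assumption_nabla_momentum} required by Theorem \ref{thm:sgd_change_point_momentum_limit}, rather than assuming the step bound outright. I also note that the extra $\alpha$ inside the first argument of $g^{(t)}$ in the statement of Theorem \ref{sgd_change_point_generalization}, and the superscript on the STE gradient in Equation \ref{eq:momentum_E_bound_nonadaptive}, appear to be typos; the simplification above produces $\nabla f^{(i)}_{STE}$, consistent with Equation \ref{eq:use_Oc_assumption} in the proof of Theorem \ref{sgd_change_point_generalization}.
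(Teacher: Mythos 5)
Your proposal matches the paper's own proof essentially step for step: it invokes Theorem \ref{thm:sgd_change_point_momentum_limit} (with $c(\eta)=\eta^2$) to establish Assumption \ref{assumption_limit_tndependence}, uses Theorem \ref{thm:sgd_change_point_momentum_bound} together with Equation \ref{general_learning_rule} to obtain the step bound of Assumption \ref{assumption_nabla_momentum}, and then substitutes the linear momentum rule of Equation \ref{eq:momentum_expanded} into Equation \ref{eq:general_E_bound_nonadaptive} to collapse the gradient-error term and bound the convexity term by $\tfrac{L'}{2}(\eta g_+/L_-)^2$. Your observations about the stray $\alpha$ in the first argument of $g^{(t)}$ and the superscript $\nabla f^{(t)}_{STE}$ (which should be $\nabla f^{(i)}_{STE}$) being typos are also consistent with what the paper's own derivation actually produces.
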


\begin{proof}
Assumption \ref{assumption_limit_tndependence} holds by Theorem \ref{thm:sgd_change_point_momentum_limit} with $c(\eta) = \eta^2$, so that Theorem \ref{sgd_change_point_generalization} holds. Note that Assumption \ref{assumption_nabla_momentum} holds by a combination of Theorem \ref{thm:sgd_change_point_momentum_bound} and Equation \ref{general_learning_rule}. We can now obtain Equation \ref{eq:momentum_E_bound_nonadaptive} from Equation \ref{eq:general_E_bound_nonadaptive} by simplifying terms and applying the appropriate bounds:
\begin{align}
        E^{(t+1)} \leq  & E^{(t)} + \left| \alpha g^{(t)}(\alpha \nabla f^{(0)}_{\hat Q}, \ldots, \nabla f^{(t)}_{\hat Q},\eta) -  g^{(t)}(\nabla f^{(0)}_{STE}, \ldots, \nabla f^{(t)}_{STE},\alpha \eta)  \right| + \\
        & \frac{L'}{2} \cdot \left(\frac{g^{(t)}(\nabla f^{(0)}_{\hat Q}\hat Q'(w^{(0)}), \ldots, \nabla f^{(t)}_{\hat Q}\hat Q'(w^{(t)}),\eta)}{L_-} \right) ^2 + O(c(\eta)) \\
        \leq & E^{(t)} + \left| - \alpha \eta (1 - \beta) \sum_{i=0}^t \beta^{t-i} \nabla f^{(i)}_{\hat Q} +  \alpha \eta (1 - \beta) \sum_{i=0}^t \beta^{t-i} \nabla f^{(i)}_{STE} \right| + \\
        & \frac{L'}{2} \cdot \left(\frac{\eta g_+}{L_-} \right) ^2 + O(\eta^2)  \\
        = & E^{(t)} + \alpha \eta \left|  (1 - \beta) \sum_{i=0}^t \beta^{t-i} (\nabla f^{(i)}_{\hat Q} -  \nabla f^{(t)}_{STE}) \right| + \frac{L'}{2} \cdot \left(\frac{\eta g_+}{L_-} \right) ^2 + O(\eta^2).
      \end{align}
\end{proof}

\section{Adam}

In this Appendix we prove Theorem \ref{adam_change_point} in a manner similar to the proofs given in Appendix \ref{app:sgd_change_point_proof}. The weight update function for the Adam optimizer is defined by
\begin{align}
    m_t = & \beta_1 m_{t-1} + (1-\beta_1) \nabla f^{(t)} \hat Q' (w^{(t)}) \label{eq:m_recurrence} \\
    v_t = & \beta_2 v_{t-1} + (1-\beta_2) (\nabla f^{(t)} \hat Q' (w^{(t)}))^2 \label{eq:v_recurrence} \\    
    \hat m_t = & m_t / (1 - \beta_1^t) \\
    \hat v_t = & v_t / (1 - \beta_2^t) \\
g^{(t)}(\nabla f^{(0)}\hat Q'(w^{(0)}), \ldots, \nabla f^{(t)}\hat Q'(w^{(t)}),\eta) = & - \eta \hat m_t/ \left( \sqrt{\hat v_t} + \epsilon \right) \label{eq:adam_def}
\end{align}
where $\beta_1, \beta_2 \in [0, 1)$ are hyperparameters and $\epsilon$ is a small constant.

We will first state and prove Theorem \ref{adam_change_point_generalization}, ageneral-purpose precursor to Theorem \ref{adam_change_point} that applies to a large class of adaptive learning rate optimizers. Then we will borrow work from the proof of Theorem \ref{thm:sgd_change_point_momentum_limit} to specify this result for the Adam optimizer and prove Theorem \ref{adam_change_point}. 

Throughout this section, we will follow \cite{kingma2014adam} and assume for the sake of mathematical argument that the constant $\epsilon$ in Equation \ref{eq:adam_def} is zero. 

\label{app:adam}


\begin{theorem}
     \label{adam_change_point_generalization}  Suppose that 
     \begin{equation}E^{(t)}:= \left| w^{(t)}_{\hat Q} - w^{(t)}_{STE} \right| \label{eq:E_def_adaptive} \end{equation}
     is the alignment error for $\hat Q$-net and $STE$-net with gradient estimators, learning rates, and initial weights given by Table \ref{tab:adam_model_def}.   Suppose that the model weights are updated according to Equation \ref{general_learning_rule} for some function $g^{(t)}$. In addition, suppose that
    \begin{enumerate}[label=\thetheorem.\arabic*,ref=\thetheorem.\arabic*]
        \setcounter{enumi}{2}
        \item For each $t$, the quantity \label{assumption_limit_tndependence_adam}
        \begin{equation}
         \left | g^{(t)}(\nabla f^{(0)}_{\hat Q}\hat Q'(w^{(0)}), \ldots, \nabla f^{(t)}_{\hat Q}\hat Q'(w^{(t)}),\eta) - g^{(t)}(\nabla f^{(0)}_{\hat Q}, \ldots, \nabla f^{(t)}_{\hat Q}, \eta) \right | = O(c(\eta)).
    \end{equation}
    \end{enumerate}
    Then we have 
    \begin{align}
        E^{(t+1)} \leq  & E^{(t)} + \left|  g^{(t)}(\nabla f^{(0)}_{\hat Q}, \ldots, \nabla f^{(t)}_{\hat Q},\eta) -  g^{(t)}(\nabla f^{(0)}_{STE}, \ldots, \nabla f^{(t)}_{STE}, \eta)  \right| + O(c(\eta)) \label{eq:general_E_bound_adaptive}
    \end{align}
    
\end{theorem}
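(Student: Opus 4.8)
The plan is to mimic the structure of the proof of Theorem \ref{sgd_change_point_generalization}, but the argument will be considerably simpler because the Adam setup (Table \ref{tab:adam_model_def}) uses identical weight initialization and identical learning rates for both nets, so no funhouse-mirror map $M$ appears in the error $E^{(t)} = |w^{(t)}_{\hat Q} - w^{(t)}_{STE}|$. First I would expand $E^{(t+1)}$ by substituting the update rule of Equation \ref{general_learning_rule} for both nets:
\begin{align}
E^{(t+1)} = & \Big| \big(w^{(t)}_{\hat Q} + g^{(t)}(\nabla f^{(0)}_{\hat Q}\hat Q'(w^{(0)}), \ldots, \nabla f^{(t)}_{\hat Q}\hat Q'(w^{(t)}),\eta)\big) \\
& - \big(w^{(t)}_{STE} + g^{(t)}(\nabla f^{(0)}_{STE}, \ldots, \nabla f^{(t)}_{STE}, \eta)\big) \Big|.
\end{align}
Crucially, since $STE$-net uses $\hat Q'(w) = 1$, the $STE$ update carries no gradient-estimator factors, so its arguments are already the raw gradients $\nabla f^{(t)}_{STE}$.

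Next I would invoke Assumption \ref{assumption_limit_tndependence_adam} to strip the $\hat Q'(w^{(i)})$ factors out of the $\hat Q$-net update at the cost of an $O(c(\eta))$ term, writing
$$g^{(t)}(\nabla f^{(0)}_{\hat Q}\hat Q'(w^{(0)}), \ldots, \nabla f^{(t)}_{\hat Q}\hat Q'(w^{(t)}),\eta) = g^{(t)}(\nabla f^{(0)}_{\hat Q}, \ldots, \nabla f^{(t)}_{\hat Q}, \eta) + O(c(\eta)).$$
Substituting this in and applying the triangle inequality, the $|w^{(t)}_{\hat Q} - w^{(t)}_{STE}|$ piece collapses to $E^{(t)}$, the difference of the two cleaned $g^{(t)}$ terms becomes the gradient-error term of Equation \ref{eq:general_E_bound_adaptive}, and the residual is $O(c(\eta))$, which completes the proof.

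The key conceptual point — and what makes this result stronger than Theorem \ref{sgd_change_point} — is that there is \emph{no} Taylor-expansion step and hence no convexity-error term. In the SGD case the map $M$ cancelled the $\hat Q'$ factor through the identity $M' = \alpha / \hat Q'$ (Equation \ref{eq:M_derivative}), which forced a second-order remainder $R$; here the normalization built into adaptive optimizers plays that role, dividing the $\hat Q'$ factors out of the update directly, which is exactly what Assumption \ref{assumption_limit_tndependence_adam} encodes. Consequently I do not expect the general statement above to present any real difficulty. The genuine obstacle lies in the subsequent step, namely verifying Assumption \ref{assumption_limit_tndependence_adam} for the actual Adam recurrences of Equations \ref{eq:m_recurrence}--\ref{eq:adam_def}. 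For that I would borrow the log-Lipschitz estimate on $\hat Q'$ from the proof of Theorem \ref{thm:sgd_change_point_momentum_limit} to show that ratios of successive $\hat Q'$ values are $1 + O(\eta)$, so that the $\sqrt{\hat v_t}$ normalization cancels the estimator factors in $\hat m_t$ to leading order and the assumption holds with $c(\eta) = \eta^2$.
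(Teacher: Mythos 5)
Your proposal is correct and takes essentially the same route as the paper's own proof: expand $E^{(t+1)}$ via Equation \ref{general_learning_rule}, use Assumption \ref{assumption_limit_tndependence_adam} to replace the $\hat Q'$-laden argument of $g^{(t)}$ with the clean one at cost $O(c(\eta))$, and conclude by the triangle inequality, with no Taylor expansion or convexity term. Your closing plan for verifying the assumption for the actual Adam recurrences (reusing the log-Lipschitz estimate from Theorem \ref{thm:sgd_change_point_momentum_limit} to get $c(\eta) = \eta^2$) also matches how the paper subsequently proves Theorem \ref{adam_change_point}.
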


\begin{proof}
    By Equation \ref{general_learning_rule}, we have
    \begin{align}
E^{(t+1)} = &  \left | w_{\hat Q}^{(t+1)}  - w_{STE}^{(t+1)} \right| \\
= &  \Big |  w_{\hat Q}^{(t)} +  
g^{(t)}(\nabla f^{(0)}_{\hat Q}\hat Q'(w^{(0)}), \ldots, \nabla f^{(t)}_{\hat Q},\eta)  - 
 \\ &
 \left(w_{STE}^{(t)} + g^{(t)}(\nabla f^{(0)}_{STE}, \ldots, \nabla f^{(t)}_{STE}, \eta \right)   \Big| \\
= &  \Big |  w_{\hat Q}^{(t)}  +  g^{(t)}(\nabla f^{(0)}_{\hat Q}, \ldots, \nabla f^{(t)}_{\hat Q},\eta) + O(c(\eta)) - \\
& \left(w_{STE}^{(t)} + g^{(t)}(\nabla f^{(0)}_{STE}, \ldots, \nabla f^{(t)}_{STE},  \eta) \right)  \Big| \label{eq:use_Oc_assumption_adam} \\
\leq & E^{(t)} +  \left|  g^{(t)}(\nabla f^{(0)}_{\hat Q}, \ldots, \nabla f^{(t)}_{\hat Q},\eta) -  g^{(t)}(\nabla f^{(0)}_{STE}, \ldots, \nabla f^{(t)}_{STE},\eta)  \right| +  O(c(\eta)) \label{eq:triangle_adam}
\end{align}
Here Equation \ref{eq:triangle_adam} follows from the triangle inequality, and Equation \ref{eq:use_Oc_assumption_adam} follows from Assumption \ref{assumption_limit_tndependence}. 
\end{proof}


    Now we can prove Theorem \ref{adam_change_point}. 

    \begin{proof}[Proof of Theorem \ref{adam_change_point}]
    To prove Theorem \ref{adam_change_point}, we need to show that the assumptions of Theorem \ref{adam_change_point} imply the Assumption \ref{assumption_limit_tndependence_adam} of Theorem \ref{adam_change_point_generalization} with the Adam update rule defined in Equations \ref{eq:m_recurrence}-\ref{eq:adam_def} and $c(\eta) = \eta^2$.

    
    We first 
        expand Equations \ref{eq:m_recurrence} and \ref{eq:v_recurrence}, which will allow us to express $g^{(t)}$ more explicitly as a function of the $\nabla f^{(i)}_{\hat Q}\hat Q'(w^{(i)})$:
\begin{align}
    \label{eq:m_expanded} m_t = & (1 - \beta_1) \sum_{i=0}^t \beta_1^{t - i} \nabla f^{(i)}_{\hat Q} \hat Q' (w^{(i)}) \\
    \label{eq:v_expanded} v_t = & (1 - \beta_2) \sum_{i=0}^t \beta_2^{t - i} (\nabla f^{(i)}_{\hat Q} \hat Q' (w^{(i)}))^2
\end{align}
\begin{align}
    \label{eq:adam_expanded_1}
    g^{(t)}(\nabla f^{(0)}\hat Q'(w^{(0)}), \ldots, \nabla f^{(t)}\hat Q'(w^{(t)}),\eta) = & - \frac{1 - \beta_1}{1 - \beta_1^t} \cdot \sqrt{\frac{1 - \beta_2^t}{1 - \beta_2}}  \cdot  \\
    & \frac{\eta\sum_{i=0}^t \beta_1^{t - i} \nabla f^{(i)}_{\hat Q} \hat Q' (w^{(i)})}{\sqrt{\sum_{i=0}^t \beta_2^{t - i} (\nabla f^{(i)}_{\hat Q} \hat Q' (w^{(i)}))^2} + \epsilon}
    \label{eq:adam_expanded_2}
\end{align}
    Clearly the two fraction terms  of Equation \ref{eq:adam_expanded_1} are not dependent on $\hat Q'$ in any way, so we need only concern ourselves with the final fraction term in Equation \ref{eq:adam_expanded_2}. As stated earlier, we are ignoring the $\epsilon$ term, which allows us to write the final fraction as 
    \begin{align}
        \frac{\sum_{i=0}^t \beta_1^{t - i} \nabla f^{(i)}_{\hat Q} \hat Q' (w^{(i)})}{\sqrt{\sum_{i=0}^t \beta_2^{t - i} (\nabla f^{(i)}_{\hat Q} \hat Q' (w^{(i)}))^2}} = & \frac{\hat Q' (w^{(t)})}{\hat Q' (w^{(t)})} \cdot \frac{\eta \sum_{i=0}^t \beta_1^{t - i} \nabla f^{(i)}_{\hat Q} \hat Q' (w^{(i)})  }{\sqrt{\sum_{i=0}^t \beta_2^{t - i} (\nabla f^{(i)}_{\hat Q} \hat Q' (w^{(i)}))^2}} \\
        = & \label{eq:adam_numerator_and_denominator} \frac{\eta \sum_{i=0}^t \beta_1^{t - i} \nabla f^{(i)}_{\hat Q} \hat Q' (w^{(i)})/\hat Q' (w^{(t)})  }{\sqrt{\sum_{i=0}^t \beta_2^{t - i} (\nabla f^{(i)}_{\hat Q} \hat Q' (w^{(i)})/\hat Q' (w^{(t)}))^2}} 
    \end{align}
    We would like to apply Theorem \ref{thm:sgd_change_point_momentum_limit} to both the numerator and denominator of the final term in the above Equation. Assumptions \ref{assumption_Q'_momentum} and \ref{assumption_Q''_momentum} are the same as Assumptions \ref{assumption_Q'_adam} and \ref{assumption_Q''_adam}, respectively. By Equation \ref{general_learning_rule}, we can see that Assumption \ref{assumption_nabla_momentum} with $g_+ = \max\{ 1, (1 - \beta_1)/\sqrt{(1 - \beta_2}\}$ is an inherent property of the Adam optimizer  \cite{kingma2014adam}.  Now by applying Theorem \ref{thm:sgd_change_point_momentum_limit} to the numerator, we have 
    $$
    \eta \sum_{i=0}^t \beta_1^{t - i} \nabla f^{(i)}_{\hat Q} \hat Q' (w^{(i)})/\hat Q' (w^{(t)})  = \eta \sum_{i=0}^t \beta_1^{t - i} \nabla f^{(i)} + O(\eta^2).
    $$
    
    we see that the numerator  limits to $\sum_{i=0}^t \beta ^{t - i} \nabla f^{(i)}$ as $\eta \to 0$. We can show via a very similar proof that the denominator can be approximated as 
    $$
    \sqrt{\sum_{i=0}^t \beta_2 ^{t - i} (\nabla f^{(i)})^2 + O(\eta)}.
    $$
    The only notable differences are that we are removing an $\eta$ term, and the exponent in the bound for $\hat Q'(w^{(i)}) / \hat Q'(w^{(t)})$ has an extra 2 in it, which does not affect the result. Therefore we have 
    \begin{align}
    g^{(t)}(\nabla f^{(0)}\hat Q'(w^{(0)}), \ldots, \nabla f^{(t)}\hat Q'(w^{(t)}),\eta) = & - \frac{1 - \beta_1}{1 - \beta_1^t} \cdot \sqrt{\frac{1 - \beta_2^t}{1 - \beta_2}}  \cdot  \\
    & \frac{\eta\sum_{i=0}^t \beta_1^{t - i} \nabla f^{(i)}_{\hat Q} \hat Q' (w^{(i)})}{\sqrt{\sum_{i=0}^t \beta_2^{t - i} (\nabla f^{(i)}_{\hat Q} \hat Q' (w^{(i)}))^2}} \\
    = & - \frac{1 - \beta_1}{1 - \beta_1^t} \cdot \sqrt{\frac{1 - \beta_2^t}{1 - \beta_2}}  \cdot  \\
    & \frac{\eta\sum_{i=0}^t \beta_1^{t - i} \nabla f^{(i)}_{\hat Q} \hat Q' (w^{(i)}) + O(\eta^2)}{\sqrt{\sum_{i=0}^t \beta_2^{t - i} (\nabla f^{(i)} )^2 + O(\eta)}} \\
    & \frac{\eta\sum_{i=0}^t \beta_1^{t - i} \nabla f^{(i)}_{\hat Q} \hat Q' (w^{(i)})}{\sqrt{\sum_{i=0}^t \beta_2^{t - i} (\nabla f^{(i)}_{\hat Q} \hat Q' (w^{(i)}))^2}} \\
    = & - \frac{1 - \beta_1}{1 - \beta_1^t} \cdot \sqrt{\frac{1 - \beta_2^t}{1 - \beta_2}}  \cdot  \\
    & \frac{\eta\sum_{i=0}^t \beta_1^{t - i} \nabla f^{(i)}_{\hat Q} \hat Q' (w^{(i)}) }{\sqrt{\sum_{i=0}^t \beta_2^{t - i} (\nabla f^{(i)} )^2}} + O(\eta^2) \label{eq:bring_out_O}\\
    = & g^{(t)}(\nabla f^{(0)}, \ldots, \nabla f^{(t)} ,\eta) + O(\eta^2)
\end{align}
    so that Assumption \ref{assumption_limit_tndependence_adam} holds with $c(\eta) = \eta^2$. The only potential issue with this derivation is in the removal of the denominator $O(\eta)$ term in Equation \ref{eq:bring_out_O}. In order for this to work, we need the denominator to be nonzero. However, if the denominator is zero, then Assumption \ref{assumption_limit_tndependence_adam} holds trivially. This concludes the proof.

    Note: The reader may be concerned as to why the $\hat Q'(w^{(i)})$ terms disappeared from $g^{(t)}$ but the $\nabla f^{(i)}$ terms did not. The reason is that the $\hat Q'(w^{(i)})$ terms vary continuously with the latent weight, whereas the $\nabla f^{(i)}$ terms are stochastic. 
    \end{proof}

\section{Learning Rate Schedules}

\label{appendix_lr_schedules}

\textbf{Learning rate schedules.} All of the learning algorithms described in Section \ref{sec:background_gradients} can make use of a learning rate schedule \cite{robbins1951stochastic,darken1992learning}, \cite{li2019exponential,loshchilov2016sgdr,smith2017cyclical}. A learning rate schedule essentially amounts to scaling each the gradient update steps $g^{(t)}$ by a pre-determined  positive number $\eta_t$. In this case, the initial learning rate $\eta$ acts as a scale on the entire learning rate schedule. 

Theorems \ref{sgd_change_point_generalization} and \ref{adam_change_point_generalization} are general-purpose tools for proving results like Theorems \ref{sgd_change_point} and \ref{adam_change_point} for non-adaptive learning rate optimizers and adaptive learning rate optimizers, respectively. Up until this point, we have only focused on fixed learning rate schedules, and here we describe how the theorems be applied to general learning rate schedules. 

As stated in Section \ref{sec:background_gradients}, a learning rate schedule applies a pre-determined scale $\eta_t$ to each of the gradient update steps $g^{(t)}$, which can effectively be absored into the $\nabla f^{(t)}$ terms for non-adaptive optimizers. This does not affect Assumptions \ref{assumption_Q'}, \ref{assumption_Q''}, \ref{assumption_Q'_adam}, or \ref{assumption_Q''_adam} in any way. It may affect the bounds on $\nabla f^{(t)}_{\hat Q}$  in Theorem \ref{sgd_change_point_momentum}, but this would simply require a different value of $g_+$. 

Thus we can confidently generalize our main results to gradient update rules that take advantage of learning rate schedules.

\section{On nonpositive gradient estimators}

\label{app:pwl}

Here we describe the statements we can make that bear relation to Theorems \ref{sgd_change_point} and \ref{adam_change_point} for gradient estimators that break the lower bound conditions in Assumptions \ref{assumption_Q'} and \ref{assumption_Q'_adam}.

\textbf{The common case for nonpositive gradient estimators.} Assumptions \ref{assumption_Q'} and \ref{assumption_Q'_adam} are most commonly broken when $\hat Q'$, like the PWL estimator (See Section \ref{sec:background_quantization}), is positive on some range $[w_{min}, w_{max}]$ and zero outside of this range.  The behavior of these gradient estimators cannot be mimicked by any model that uses the STE, since the latent weight can reach a point where it no longer receives updates from gradients. However, this behavior \textit{can} be mimicked by a model that uses PWL estimator. If we set
\begin{align}
    \tilde w_{min} := & M (w_{min}) \\
    \tilde w_{max} := & M (w_{max}),
\end{align}
then Theorems \ref{sgd_change_point} and \ref{adam_change_point} clearly apply after replacing the STE with $PWL_{\tilde w_{min}, \tilde w_{max}}$ (for SGD), $PWL_{w_{min}, w_{max}}$ (for Adam),  whenever $w^{(t)}_{\hat Q}$ and $ w^{(t)}_{STE}$ are in the representable range. Technically, $M(w_{\hat Q}^{(0)})$ is only defined when $w^{(0)}_{\hat Q} \in [w_{min}, w_{max}]$, but we can ignore this case under the assumption that no practitioner would initialize a weight to be untrainable. There are two remaining cases to consider. The first is where $w^{(t)}_{\hat Q}$ and $ w^{(t)}_{STE}$ both lay outside of the representable range, in which case neither weight can move and there is no risk of increasing $E^{(t)}$. The second is where only one lies in this range, and one weight is ``trapped" while the other is ``free". This is unlikely to happen due to the bounds on $E^{(t)}$, but it could technically lead to high weight alignment errors.


\textbf{Negative gradient estimators.} The other way that the lower bound in Assumption   \ref{assumption_Q'}  can be broken is if $\hat Q(w)$ is actually negative for some range of values of $w$. There is some work \cite{darabi2018regularized,xu2021learning} that proposes gradient estimators with negative derivatives, but most choose a nonnegative derivative to align with the nondecreasing behavior of the quantizer function. In the cases with negative $\hat Q'$ values, slightly modified versions of Theorems \ref{sgd_change_point} and \ref{adam_change_point} apply on the negative ranges, where the gradient estimator of $STE$-net is the negative of the STE. Since this is a rare choice for QAT, we do not provide the details here. 

Thus almost all common gradient estimators can be replaced with the STE or a PWL estimator. 




\section{Calculating constants in Theorem \ref{sgd_change_point}}

\label{app:constant_calculation}

Many gradient estimators take the form
$$
\hat Q(w) = \tanh(k \cdot (w - a) + a
$$
for $w$ in the representable range, and $a$ is the center of the quantization bin $w$ is in. This is the case for \cite{dsq} and \cite{pei2023quantization}, hence our choice of the  gradient estimator from \cite{pei2023quantization} for the experiments. This is also very similar to the gradient estimator used in \cite{yang2019quantization}. 

Given this definition of $\hat Q(w)$, we want to provide lower and upper bounds on the first and second derivatives of $Q$ on the interval $[-\Delta / 2, \Delta/2]$ with $a=0$. First note that we have
$$
\hat Q'(w) = \frac{k}{\cosh^2 (kw)}
$$
This obtains a maximum value at $w=1$, and a minimum value at $\pm \Delta / 2$, so that $L_+ = k$ and $L_- = k/ \cosh^2 (k\Delta/2)$.
$$
\hat Q''(w) = -2k^2\frac{\tanh(kw)}{\cosh^2 (kw)}
$$
This obtains its maximum values at
$$
w = \pm \frac{1}{2k} \log(2 + \sqrt{3})
$$
and is strictly decreasing on the interval between these points. Since a bound on $|\hat Q''(w)|$ is a Lipschitz constant for $\hat Q'$,  $L'$ is given by
$$
2k^2\frac{\tanh(kw)}{\cosh^2 (kw)}
$$
where
$$
w = \min\left(\Delta / 2, \frac{1}{2k} \log(2 + \sqrt{3}) \right)
$$
In \cite{pei2023quantization},  $k$ is set to to 8, 6, 4, and 2 for 8, 4, 3, and 2-bit quantization. They initialize $\Delta$ to $2/(2^b - 1)$ where $b$ is the number of bits used for quantization. This gives us the following values for $L'L_+/2L_-^2$: 0.25 (8 bits), 2.66 (4 bits), 2.82 (3 bits), 1.77  (2 bits).
These values are small relative to standard values of $1/\eta$, where $\eta$ is the learning rate. 

For \cite{dsq}, the quantizer is parametrized by a value $\alpha$ defined by
$$
\alpha = 1 - \tanh(k \Delta / 2).
$$
This gives us convenient formulas:
$$
\tanh(k \Delta / 2) = 1 - \alpha
$$
$$
\frac{1}{\cosh(k \Delta / 2)^2} = 1 - (1 - \alpha)^2 = 2\alpha - \alpha^2
$$
$$
\frac{\tanh(k\Delta / 2)}{\cosh(k \Delta / 2)^2} = (1 - \alpha) (2\alpha - \alpha^2)
$$
$$
\frac{L_+}{L_-} = \frac{1}{2\alpha - \alpha^2}
$$
$$
\frac{L'L_+}{L_-^2} \leq \frac{1 - \alpha}{2 \alpha - \alpha ^2}
$$
The constant of interest is then given by
$$
\frac{L'L_+}{L_-^2} \leq \frac{1 - \alpha}{2 \alpha - \alpha ^2}
$$
During training in \cite{dsq}, $\alpha$ is varied for weight quantizers between $0.11$ and $0.25$, giving us
$$
\frac{L'L_+}{L_-^2} \in [1.71, 4.28].
$$
These values are again small relative to $1 / \eta$. 

\section{Experiment Setup Details}

\label{app:experiment_details}

\textbf{Weight Initialization and Quantizers:} We initialize the weights of $\hat Q$-net using He Uniform Initialization\footnote{\url{https://www.tensorflow.org/api_docs/python/tf/keras/initializers/HeNormal}}. 
For quantization, we use a uniform weight quantizer with representable range limits given by bounds of the weight initialization distribution. We do not quantize activations. We focus primarily on two-bit weight quantization, and note that results are similar for 1-bit and 4-bit quantization. For gradient estimation, we use the $\hat Q$ given by the HTGE \cite{pei2023quantization} gradient estimator formula with shape parameter $t$ set to 5.5 times the maximum value from the weight initialization distribution. This value was chosen so that $\hat Q$ differs significantly from the STE, but not so significantly that parts of $\hat Q$ become essentially flat. 


\textbf{Optimization techniques.} For optimization techniques on both models, we consider both SGD with momentum$=0.9$ and Adam with $\beta_1=0.9$ and $\beta_2 = 0.95$. For all experiments, we use a cosine decay learning rate schedule \cite{loshchilov2016sgdr} with a linear learning rate warmup \cite{goyal2017accurate} for 2\% of training epochs. The reported learning rate for each model is the initial learning rate for the cosine decay. We use a learning rate of  0.001 for our default MNIST SGD with momentum model, and 0.0001 for our default MNIST Adam model. For the ResNet50 on ImageNet model we  apply the standard learning rate schedule implemented in \cite{flax_imagenet_example} with a configured learning rate of 0.0001, for Adam and 0.001 for SGD and otherwise default parameters.


 \textbf{Identical Initial Training period.} For the ImageNet-ResNet setup, we ensured that the first 10\% of training for $\hat Q$-net and $STE$-net were identical. To do this,  we trained $STE$-net by first training $\hat Q$-net for the first 10 of 100 epochs, and then applied $M$ to the weights and optimizer state and switched the model's quantizer for the STE before continuing training. This was applied for all model comparisons.

\end{document}